\documentclass[11pt, a4paper, logo, copyright]{neutral}

\usepackage{natbib}
\usepackage{commands}
\usepackage{algorithm}
\usepackage{algpseudocode}
\usepackage{xltabular}
\usepackage{lineno}

\usepackage[utf8]{inputenc} 
\usepackage[T1]{fontenc}    
\usepackage{url}            
\usepackage{booktabs}       
\usepackage{amsfonts}       
\usepackage{nicefrac}       
\usepackage{microtype}      
\usepackage{xcolor}         

\usepackage{microtype}
\usepackage{graphicx}
\usepackage{booktabs}
\usepackage{amsmath,bm}
\usepackage{amsthm}
\usepackage{wrapfig}
\usepackage{color, soul}
\usepackage{psfrag}
\usepackage{adjustbox}
\usepackage{xspace}
\usepackage{amssymb}
\usepackage{multirow}
\usepackage{afterpage}
\usepackage{colortbl}
\usepackage{float}
\usepackage{textcomp}
\usepackage{siunitx}
\usepackage{mdframed}
\usepackage{makecell}
\usepackage[super]{nth}

\usepackage[inline]{enumitem}
\usepackage{bbding}
\usepackage{pifont}
\usepackage{bbm}
\usepackage{xcolor}
\usepackage{pdflscape}
\usepackage{theoremref}
\usepackage{pifont}
\usepackage{adjustbox}
\usepackage{xcolor}
\usepackage{mathtools}
\usepackage{subcaption}
\usepackage{pbox}
\usepackage{longtable}
\usepackage{listings}

\colorlet{darkgreen}{green!65!black}
\colorlet{darkblue}{blue!75!black}
\colorlet{darkred}{red!80!black}
\definecolor{statistical}{HTML}{8c564b}
\definecolor{structural}{HTML}{0070C0}
\definecolor{semantic}{HTML}{008080}
\definecolor{yellow}{HTML}{f7c600}
\definecolor{lightblue}{HTML}{0071bc}
\definecolor{lightgreen}{HTML}{39b54a}
\definecolor{deemph}{gray}{0.55}

\definecolor{baselinecolor}{gray}{.95}
\definecolor{graycolor}{gray}{.95}

\usepackage[pagebackref=false, breaklinks=true, colorlinks, citecolor=lightblue, urlcolor=lightblue, bookmarks=false]{hyperref}

\newlength\savewidth
\newcolumntype{x}[1]{>{\centering\arraybackslash}p{#1pt}}
\newcolumntype{y}[1]{>{\raggedright\arraybackslash}p{#1pt}}
\newcolumntype{z}[1]{>{\raggedleft\arraybackslash}p{#1pt}}

\definecolor{textgreen}{RGB}{57, 172, 57}
\definecolor{textred}{RGB}{200, 10, 10}
\definecolor{boxyellow}{HTML}{FAF5E6}
\definecolor{frameyellow}{HTML}{B7950B}
\definecolor{boxpurple}{HTML}{F4EFF6}
\definecolor{framepurple}{HTML}{6C3483}
\definecolor{boxblue}{HTML}{EEF4F8}
\definecolor{frameblue}{HTML}{2874A6}
\definecolor{boxgray}{HTML}{F0F2F3}
\definecolor{framegray}{HTML}{5D6D7E}
\definecolor{boxgreen}{HTML}{EAFaf1}
\definecolor{framegreen}{HTML}{196F3D}
\usepackage{pifont}

\usepackage[most]{tcolorbox}
\tcbset{
    center title,
    left=0pt,
    right=0pt,
    top=0pt,
    bottom=0pt,
    colframe=black,
    colback=gray!10,
    width=\linewidth,
    enlarge left by=0mm,
    boxsep=5pt,
    boxrule=1pt,
    arc=0pt,outer arc=0pt,
}

\newtcolorbox{promptbox}[1][]{
    enhanced,
    colback=white,
    colframe=black,
    fonttitle=\bfseries,
    title=Prompt,
    attach boxed title to top left={xshift=10pt, yshift*=-\tcboxedtitleheight/2},
    boxed title style={colback=black},
    top=12pt, bottom=10pt, left=10pt, right=10pt,
    #1
}

\tcbset{
    fancybox/.style={
        enhanced,
        breakable,
        arc=1mm,
        outer arc=1mm,
        boxrule=0pt,
        leftrule=1mm,
        toptitle=0.5mm,
        bottomtitle=0.5mm,
        fonttitle=\bfseries\sffamily\small,
        fontupper=\scriptsize,
        coltitle=white,
        drop shadow
    }
}

\newtcolorbox{thoughtbox}{
    fancybox,
    colback=boxyellow,
    colframe=frameyellow,
    coltitle=black,
    title=Thought
}
\newtcolorbox{userbox}{
    fancybox,
    colback=boxpurple,
    colframe=framepurple,
    title=User
}
\newtcolorbox{agentbox}{
    fancybox,
    colback=boxblue,
    colframe=frameblue,
    title=Agent
}
\newtcolorbox{outputbox}{
    fancybox,
    colback=boxgray, 
    colframe=framegray,
    coltitle=black,
    title=Execution Output
}
\newtcolorbox{solutionbox}{
    fancybox,
    colback=boxgreen,
    colframe=framegreen,
    title=Solution
}

\definecolor{codegreen}{rgb}{0.0, 0.5, 0.0}
\definecolor{codegray}{rgb}{0.4, 0.4, 0.4}
\definecolor{codepurple}{rgb}{0.50, 0, 0.50}
\definecolor{backcolour}{rgb}{0.97, 0.97, 0.97}

\lstdefinestyle{mystyle}{
    backgroundcolor=\color{backcolour},
    commentstyle=\color{codegreen},
    keywordstyle=\color{magenta},
    stringstyle=\color{codepurple},
    basicstyle=\ttfamily\scriptsize, 
    breakatwhitespace=false,
    breaklines=true,
    captionpos=b,
    keepspaces=true,
    numbers=none,              
    showspaces=false,
    showstringspaces=false,
    showtabs=false,
    tabsize=2,
    frame=single,
    rulecolor=\color{black!10}, 
    frameround=fttt,            
    upquote=true
}
\usepackage{graphicx}
\usepackage{subcaption}
\newcommand{\Id}{\mathrm{I}}

\title{
Bandit Simulation for Average Reward Inference
}

\author[1]{Samya Praharaj$^*$}
\author[2]{Chih-Yu Chang$^*$}
\author[1]{Koulik Khamaru}
\author[2]{Kelly W. Zhang}

\affil[1]{Rutgers University}
\affil[2]{Imperial College London}
\correspondingauthor{Correspondence to: kelly.zhang@imperial.ac.uk}

\codelink{https://github.com/kellywzhang/BanditSimulationForInference}

\begin{abstract}
Multi-arm bandit algorithms are increasingly used in online platforms, clinical trials, and social science experiments, but valid statistical inference on their performance remains an open challenge. After deploying bandits, a natural question is whether one can construct a confidence interval for its mean reward and assess whether it reliably outperforms a baseline policy. The total reward achieved in any single bandit deployment is random, and deploying a bandit twice on the same population typically yields different reward trajectories due to stochastic rewards. Standard statistical inference methods cannot be used because bandit algorithms introduce complex dependencies in the collected data, which violate the i.i.d. assumption underlying many classical approaches. Moreover, existing inference methods for adaptively collected data only apply to estimands that do not depend on the data-collection algorithm (such as the mean reward under a fixed action). We propose \bo{Bandit Simulation for Inference (BSI)}, a framework that fits a simulator of the bandit environment from observed data---either on-policy or off-policy---and uses it to estimate the mean reward under any evaluation policy, including adaptive blackbox algorithms. BSI formally propagates uncertainty in the estimated simulator parameters into the confidence interval construction. Furthermore, for BSI to be valid, it requires only weak exploration assumptions on the behavior policy and avoids importance weighting. We prove that BSI yields asymptotically valid confidence intervals, and demonstrate empirically that it maintains nominal coverage in settings where standard off-policy evaluation methods fail.
\end{abstract}

\begin{document}

\maketitle

\newenvironment{Itemize}{
    \begin{itemize}[leftmargin=*]
    \setlength{\itemsep}{0pt}
    \setlength{\topsep}{0pt}
    \setlength{\partopsep}{0pt}
    \setlength{\parskip}{1pt}}
{\end{itemize}}

\addtocontents{toc}{\protect\setcounter{tocdepth}{-1}}
\section{Introduction}
Adaptive sampling algorithms, such as bandits, are increasingly used across a variety of domains: digital platforms \citep{sajeev2021contextual,zhang2025impatient,feijer2025calibrated,zhang2024disco}, digital health \citep{liao2020personalized,trella2025deployed,kumar2024using,zhang2026replicable}, public policy \citep{kasy2021adaptive,offer2021adaptive}, public health \citep{pal2024improving,mate2023improved,bent2018novel}, and online education \citep{musabirov2025platform,cai2021bandit}. While much work has focused on designing algorithms that minimize regret, there is a growing need for valid post-deployment statistical inference. 
The total reward achieved in any single bandit deployment is random, and deploying a bandit twice on the same population typically yields different reward trajectories due to stochastic rewards. Thus, after deploying a bandit algorithm, a key question is: 
\begin{center}
\textit{What is the total expected reward of the algorithm on this population? Is it significantly greater than that of a simpler baseline or competing policy? Can we construct a valid confidence interval for it?}
\end{center}
Answering these questions is essential for deciding whether to roll out an algorithm more broadly. 

\subsection{Motivating Inference Problem}
\label{sec:motivating}
We consider a multi-armed bandit setting, where the environment $P$ comprises a collection of reward distributions $P \triangleq \{ P_a : a \in \MC{A} \}$ for arms $a \in \MC{A}$ with $|\MC{A}|< \infty$. For each $a \in \MC{A}$, the environment draws i.i.d. reward potential outcomes $R_1(a), \dots, R_T(a)$ from $P_a$. For each timestep $t \in [1 \colon T]$, a policy (or bandit algorithm) $\pi$ uses the history $\HH_{t-1} \triangleq \{ A_{t'}, R_{t'} \}_{t'=1}^{t-1}$ to form a distribution over $\MC{A}$, which is used to sample the action $A_t$. After selecting $A_t$, the reward $R_t \triangleq R_t(A_t)$ is revealed.  Formally, the policy $\pi$ forms the following action selection probabilities 
\begin{align*}
    \PP_{\pi}(A_t = a \mid \HH_{t-1}) = \pi(a \mid \HH_{t-1}).
\end{align*}
Note, our definition of policies $\pi$ encompasses adaptive, history-dependent, bandit algorithms. \textit{Static} policies are a special case where the action selection probabilities do not depend on $\HH_{t-1}$ or the timestep $t$:
\begin{align}
    \PP_{\pi}(A_t = a \mid \HH_{t-1}) = \PP_{\pi}(A_t = a) = \pi(a).
    \label{eqn:staticPolicy}
\end{align}
After deploying an adaptive (history-dependent) policy $\pi_1$ (e.g., Thompson sampling or $\epsilon$-greedy), we are interested in estimating and constructing confidence intervals for the \textit{value} of $\pi_1$:
\begin{align}
    \theta_T^* \triangleq \E_{P, \pi_1} \Bigg[ \frac{1}{T} \sum_{t=1}^T R_t \Bigg].
    \label{eqn:thetastar}
\end{align}
If $\pi$ was a static (non-adaptive) policy, the rewards $\{ R_t \}_{t=1}^T$ would be i.i.d. and the sample mean $\hat{\theta}_T \triangleq \frac{1}{T} \sum_{t=1}^T R_t$ would be an asymptotically Gaussian estimator for $\theta^*$. However, when $\pi$ is an adaptive, history-dependent policy, standard inference methods for i.i.d. are not applicable. In Figure \ref{fig:motivatingIssue}, we form the studentized t-statistic for the sample mean $\frac{\hat\theta_T - \theta_T^*}{ \hat{\sigma} / \sqrt{T}}$, where $\hat\sigma^2 \triangleq \frac{1}{T} \sum_{t=1}^T (R_t - \hat\theta_T)^2$. This distribution would be well-approximated by a t-distribution for non-adaptive policies $\pi$, but for common bandit algorithms, using this approximation can lead to significant Type-1 error inflation.

While there is significant literature on inference after adaptive sampling, it has focused on inferring parameters of the environment $P$ (e.g., parameters in a reward model or treatment effect) \citep{hadad2021confidence,zhang2020inference,zhang2021statistical,bibaut2021post,deshpande2018accurate,lin2025semiparametric,ying2023adaptive,khamaru2024inference}, which do not apply to inferring $\theta^*$. %
The fundamental difference with existing work is that the parameters of the environment $P$ do not depend on the adaptive algorithm ($\pi_1$); however, the mean reward $\theta^*$ depends on both the environment $P$ and the adaptive policy $\pi_1$. The works \citet{zhang2026replicable,zhang2022statistical} consider inference for the average reward under adaptive algorithms that satisfy smoothness conditions, but this excludes common algorithms like Thompson Sampling and $\epsilon$-greedy.

\begin{figure}[t]
    \vspace{-2mm}
    \centering
    \begin{subfigure}[t]{0.45\textwidth}
        \includegraphics[width=\textwidth,trim={1cm 0.5cm 1cm 0.5cm}]{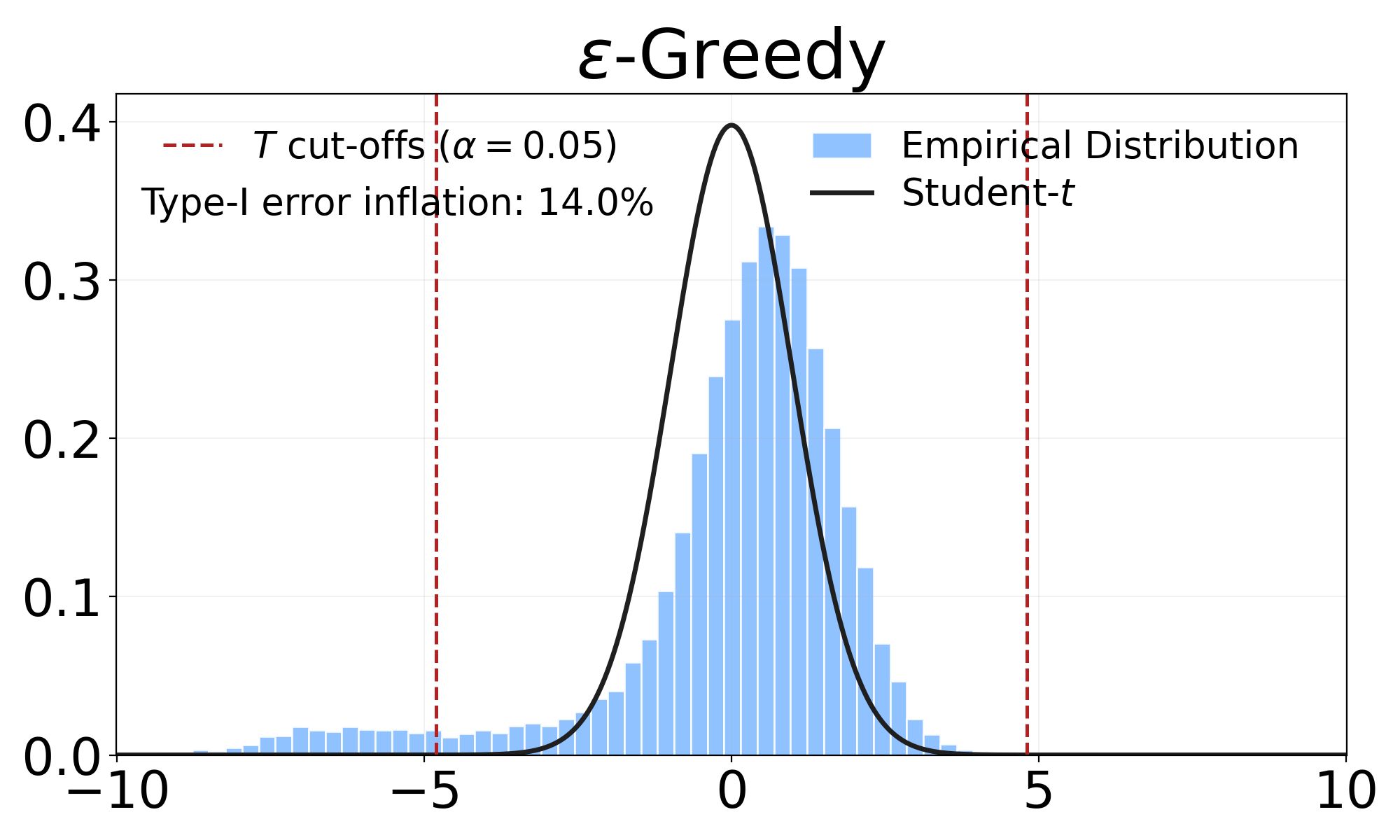}
    \end{subfigure}
    \hfill
    \begin{subfigure}[t]{0.45\textwidth}
        \includegraphics[width=\textwidth,trim={1cm 0.5cm 1cm 0.5cm}]{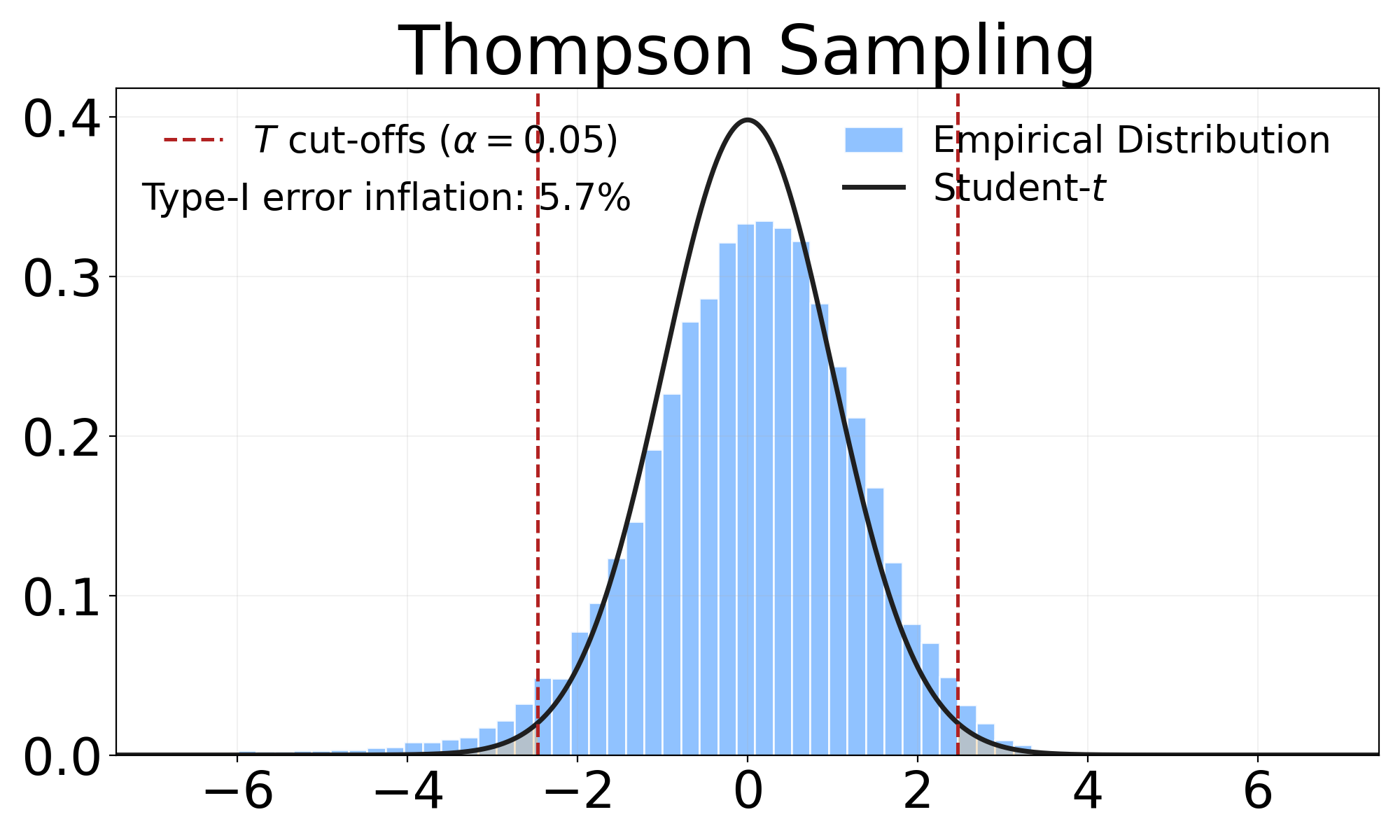}
    \end{subfigure}
    \caption{\bo{Histograms of the t-statistic for the Mean Reward under Thompson Sampling and $\epsilon$-Greedy.} Simulations ran in a three-armed bandit with arm means $(0.1,\,0.2,\,1.0)$, $\N(0, 1)$ reward noise, and horizon $T=100$ (20,000 Monte Carlo repetitions). We use the t-statistic for the mean reward $\frac{\hat\theta_T - \theta_T^*}{ \hat{\sigma} / \sqrt{T}}$, where $\hat{\theta}_T \triangleq \frac{1}{T} \sum_{t=1}^T R_t$ and $\hat\sigma^2 \triangleq \frac{1}{T} \sum_{t=1}^T (R_t - \hat\theta_T)^2$. }
    \label{fig:motivatingIssue}
    \vspace{-3mm}
\end{figure}

\subsection{General Inference Objective}
A more general way to formulate our inference objective is to assume we have access to an offline dataset $\MC{D}^{(\pi_0)}$ collected by a known behavior policy $\pi_0$ in environment $P$ for $\Toff$ timesteps:
\begin{align*}
    \MC{D}^{(\pi_0)} \triangleq \big\{ \big(A_t^{(\pi_0)}, R_t^{(\pi_0)} \big) \big\}_{t=1}^{\Toff}.
\end{align*}
We are still interested in inference for $\theta^*$ from display \eqref{eqn:thetastar}, the average reward under $\pi_1$ in environment $P$ run for $T$ time-steps. We use the off-policy dataset $\MC{D}^{(\pi_0)}$ to estimate and form confidence intervals for $\theta^*$. (The motivating problem from Section \ref{sec:motivating} is the special case in which $\pi_0 = \pi_1$ and $T = \Toff$.)
While this general inference objective may look like a classical off-policy evaluation problem, standard approaches \citep{thomas2016data,wang2017optimal} do not allow the evaluation policy $\pi_1$ to depend arbitrarily on the entire history $\HH_{t-1}$. 
Existing off-policy evaluation methods for adaptive behavior policies $\pi_0$ similarly assume the evaluation policy $\pi_1$ is static \citep{bibaut2021post,shen2026efficient,zhan2021off}. For further discussion of related work, see Section \ref{sec:relatedWork}. 

\subsection{Contributions}
We introduce \bo{Bandit Simulation for Inference (BSI)}, a framework for constructing asymptotically valid confidence intervals for $\theta^*$, the value of adaptive bandit policies. Our contributions are:
\begin{enumerate}[leftmargin=*]
    \item \bo{Simulation-based confidence intervals for policy value under adaptive data.} 
    BSI uses data collected by a known behavior policy $\pi_0$ to fit a simulator of the bandit environment, uses it to estimate the value of a target policy $\pi_1$, and explicitly accounts for simulator error in confidence interval construction. BSI works in both on-policy ($\pi_1 = \pi_0$) and off-policy ($\pi_1 \not= \pi_0$) settings, %
     avoids importance weighting, and supports the evaluation of arbitrary black-box, adaptive policies.
    \item \bo{Valid inference without stability assumptions.} We prove that BSI constructs asymptotically valid confidence intervals for $\theta^*$ in bandit environments with either parametric reward models (from a known family) or rewards with sub-Gaussian noise. In contrast to prior approaches, our guarantees do not require stability or smoothness conditions on the behavior or evaluation policy.
    \item \bo{Reliable coverage where standard methods underperform.} We empirically evaluate in two synthetic environments (Gaussian and non-Gaussian sub-Gaussian rewards) and a semi-synthetic environment derived from a randomized controlled trial of financial incentives for mobile phone survey participation in Uganda \citep{gibson2022promised}. BSI maintains valid coverage under adaptive evaluation policies, while inference methods for i.i.d. data and existing off-policy evaluation methods often exhibit significant undercoverage.
\end{enumerate}

\section{Related Work}
\label{sec:relatedWork}

\bo{Inference after adaptive sampling.}
Our goal is to construct confidence intervals for the average reward $\theta_T^* = \mathbb{E}_{P,\pi_1}\bigl[\frac{1}{T}\sum_{t=1}^T R_t\bigr]$ under an adaptive, history-dependent policy~$\pi_1$.  This estimand depends on both the environment~$P$ and the policy~$\pi_1$, a feature that distinguishes our problem from most existing work. A substantial body of work addresses inference for parameters of the reward distribution~$\{ P_a : a \in \MC{A} \}$---such as arm means, arm-mean contrasts, or regression coefficients---from adaptively collected data \citep{bibaut2025demystifying}. Much of this literature constructs asymptotically valid confidence intervals via asymptotic Gaussian assumptions. Under stability or smoothness assumptions on the algorithm, standard estimators are asymptotically Gaussian \citep{lai1982least,khamaru2024inference,han2024ucb,halder2505stable,yan2026optimism,praharaj2025avoiding,fan2025statistical,guo2025statistical,zhang2022statistical,zhang2026replicable}. Since many common bandit algorithms do not satisfy such stability conditions, adjusted versions of estimators that incorporate adaptive weighting are used to derive asymptotic normality results under general bandit algorithms \citep{hadad2021confidence,zhang2020inference,zhang2021statistical,deshpande2018accurate,khamaru2021near}. A separate line of work constructs confidence regions for parameters of $\{ P_a : a \in \MC{A} \}$ via concentration inequalities, yielding guarantees that hold non-asymptotically and uniformly over time \citep{abbasi2011improved,howard2021time,waudby2024time}. 

Recent work by \citet{cho2025simulation} also uses observed data to fit a environment simulator to use for inference after adaptive sampling. Major differences from our work are \textit{(i) The estimand:} they construct confidence intervals for the mean reward of a fixed arm, while we are interested in the value of an adaptive algorithm; \textit{(ii) Applicability to different bandit algorithms:} they only provide theoretical guarantees for their approach under two bandit algorithms (explore-then-commit and UCB), while our approach can be used to evaluate the value of any adaptive algorithm (as long as the behavior policy explores sufficiently); \textit{(iii) Simulator estimation approach:} they require estimators of the simulator parameters to be estimated optimistically with a bonus term that scales at a specific rate, while we allow the simulator parameters to be estimated using any asymptotically Gaussian estimator.

\bo{Off-policy evaluation (OPE).}
OPE uses data collected under a behavior policy~$\pi_0$
to estimate the value of a different target policy $\pi_1$. If a static behavior policy $\pi_0$ is used, many approaches including importance weighting, doubly robust estimators, and bootstrapping can be used \citep{horvitz1952generalization,robins1994estimation,chernozhukov2018double,wang2017optimal,kallus2020double,uehara2022review,ouyang2026bayesian,hanna2017bootstrapping,dai2020coindice}. If the behavior policy is adaptive, but one can run multiple i.i.d. replications of the algorithm \citep{ghosh2024did,liao2020personalized}, one can use standard asymptotics for i.i.d. data across trajectories to infer the value of the policy. For OPE with adaptive behavior policies $\pi_0$, adaptive weighting methods \citep{bibaut2021post,zhan2021off} and empirical likelihood approaches \citep{karampatziakis2020empirical} have been developed, but these approaches only apply to static evaluation policies $\pi_1$. In this work, we are interested in inference when both the evaluation policy and behavior policies may both be adaptive.

\section{Bandit Simulation for Inference with Parametric Reward Distributions}\label{sec:theory}
We now formally introduce our problem and discuss our \textbf{Bandit Simulation for Inference (BSI)} approach (Algorithm~\ref{alg:sbi_bandit}). We begin with the case in which reward distributions belong to a known parametric family. Specifically, the reward distribution $P_a$ is parameterized by an unknown, low-dimensional parameter $\lambda_a^* \in \mathbb{R}^d$. We use $\lambda^\ast = (\lambda_a^\ast)_{a \in \MC{A}} \in \Lambda$ to denote the vector containing the parameters of all $|\MC{A}|$ arms, and use the shorthand $P_a \triangleq P_{\lambda_a^*}$. This encompasses many standard reward models:
\begin{enumerate*}[label=(\roman*)] 
	\item if $P_a$ is Gaussian with mean $\mu_a^*$ and variance $(\sigma_a^*)^2$, then $\lambda_a^* = \big( \mu_a^*, (\sigma_a^*)^2 \big) \in \mathbb{R} \by \mathbb{R}_{>0}$;
	\item if $P_a$ is Bernoulli with mean $\mu_a^*$, then $\lambda_a^* = \mu_a^* \in [0, 1]$;
    \item if $P_a$ is Poisson with mean $\mu_a^*$, then $\lambda_a^* = \mu_a^* \in \mathbb{R}_{>0}$.
\end{enumerate*}

Given a horizon $T > 0$ and a (possibly history-dependent bandit or black-box) \emph{target policy} $\pi_1$, we construct a confidence interval for the expected average reward of deploying $\pi_1$ for $T$ rounds in the environment $P_{\lambda^\ast} \triangleq \{ P_{\lambda^\ast_a} \}_{a \in \MC{A}}$:
\begin{align*}
	\theta^\ast_T = f_T(\lambda^\ast, \pi_1) \triangleq \E_{P_{\lambda^\ast}, \pi_1} \bigg[  \frac{1}{T} \sum_{t=1}^T R_t \bigg]
\end{align*}
Above, we use the function $f_T(\lambda, \pi)$ to denote the value of policy $\pi$ deployed in environment $\lambda$ for $T$ timesteps.

Our only assumption on the target policy $\pi_1$ is that we can execute it in a simulated environment. 
If we knew the true parameter values $\lambda^* = \left\lbrace \lambda_a^\ast\right\rbrace_{a \in \MC{A}}$, we could simply estimate $f_T(\lambda^\ast, \pi_1)$ by running $\pi_1$ in the environment $P_{\lambda^*}$ repeatedly using Monte-Carlo (MC) simulations. However, we are only able to form estimates $\hat\lambda$ of $\lambda^*$ using an offline dataset $\MC{D}^{(\pi_0)}$, a \emph{single trajectory} of length $\Toff$ drawn using (a possibly bandit) \emph{behavior policy} $\pi_0$. 

To estimate $\theta^*_T$, we use the plugin estimator $\hat{\theta}_T = f_T(\hat{\lambda}, \pi_1)$. Given $\hat{\lambda}$, we estimate  $f_T(\hat{\lambda}, \pi_1)$ by running a large number of MC simulations. For our theoretical results, we ignore the MC error in estimating $f_T(\hat{\lambda}, \pi_1)$ given $\hat{\lambda}$ since for any fixed $\lambda$, $f_T(\lambda, \pi_1)$ can be estimated arbitrarily accurately with enough MC simulations. The dominant source of uncertainty is in the estimation of $\lambda^*$. The key idea of BSI is to account for how the uncertainty  in $\hat{\lambda}$ 
propagates into uncertainty in $f_T(\hat{\lambda}, \pi_1)$.

\subsection{Asymptotic Normality under Static (or Stable) Behavior Policies}
\label{sec:stable}
We now formalize the assumptions we use to show asymptotic normality for our plugin estimator $\hat{\theta}_T$. In this section, we consider the setting where the behavior policy $\pi_0$ is static (non-adaptive) or converges uniformly to a limiting policy; formally we require that Assumption~\ref{assump:lambdaNormality} below holds. We discuss the extension to adaptive behavior policies in Section \ref{sec:adaptiveBehavior}. %

\begin{assumption}[Asymptotic Normality $\hat{\lambda}$]
	\label{assump:lambdaNormality}
    $\sqrt{\Toff}  ( \hat{\lambda} - \lambda^* )
	\Dto \N ( 0,  \Sigma_{\lambda^*,\pi_0} )$ uniformly over the underlying environment $P_{\lambda^*, \pi_0} \in \{ P_{\lambda, \pi_0} : \lambda \in \Lambda \}$. Also, %
    $\sup_{\lambda^* \in \Lambda} \| \Sigma_{\lambda^*,\pi_0} \|_2 < \infty$.
\end{assumption}

Assumption \ref{assump:lambdaNormality} states that one is able to form estimates $\hat{\lambda} \triangleq ( \hat{\lambda}_a )_{a \in \MC{A}}$ of $\lambda^* \triangleq ( \lambda_a^* )_{a \in \MC{A}}$ that are asymptotically normal. 
When $\pi_0$ is a static policy, i.e. satisfies \eqref{eqn:staticPolicy}, the data $D^{(\pi_0)}$ consists of i.i.d. tuples. Consequently, Assumption \ref{assump:lambdaNormality} holds under standard regularity conditions for M-estimators on i.i.d. data when $\hat{\lambda}$ is the maximum likelihood estimator or the minimizer of a loss function and $\pi_0(a) > 0$ for all arms $a \in \MC{A}$; see Theorem 5.23 of \cite{van2000asymptotic}. 
Assumption \ref{assump:lambdaNormality} can also hold if $\pi_0$ is adaptive but uniformly converges to a static limiting policy as $\Toff \to \infty$; note though that common bandit algorithms (e.g., Thompson Sampling, $\epsilon$-greedy) do not satisfy such uniform convergence conditions, but certain specialized algorithms may (e.g., Boltzmann sampling)~\citep{lai1982least,zhang2020inference,zhang2026replicable,deshpande2018accurate}.

\vspace{-1mm}
\begin{algorithm}[b!]
\caption{Bandit Simulation for Inference (BSI)}
\label{alg:sbi_bandit}
\begin{algorithmic}[1]
\Require Offline data $\mathcal{D}^{(\pi_0)}$, target policy $\pi_1$, horizon $T$, number of Monte Carlo repetitions $m$
\State \texttt{\textbf{\textcolor{blue}{ // Fit simulation environment parameters (Assumption \ref{assump:lambdaNormality} or \ref{assump:lambdaNormalityAdaptive})}}}
\State Use $\mathcal{D}^{(\pi_0)}$ to form $\hat{\lambda} = (\hat{\lambda}_a)_{a \in \MC{A}}$
\For{$i=1,\dots,m$} \qquad \texttt{\textbf{\textcolor{blue}{ // Repeatedly run $\pi_1$ in fitted simulation environment}}}
    \State Initialize simulated history ${H}^{(i)}_0 \gets \emptyset$ 
    \For{$t=1,\dots,T$}
        \State Sample action: $A_t^{(i)} \sim \pi_1(\cdot \mid {H}^{(i)}_{t-1})$
        \State Sample reward: $R_t^{(i)} \sim P_{\hat{\lambda}_{ A_t^{(i)}}} (\cdot \mid A_t^{(i)})$
        \State Update history: ${H}^{(i)}_t \gets {H}^{(i)}_{t-1} \cup \big\{ \big(A_t^{(i)},R_t^{(i)} \big) \big\}$
    \EndFor
    \State $\bar{R}_T^{(i)} \gets \frac{1}{T}\sum_{t=1}^T R_t^{(i)}$
    \State $\nabla \hat{f}_T^{(i)} \gets \frac{1}{T} \sum_{t=1}^T \nabla_{\hat{\lambda}} \log P_{\hat{\lambda}_{A_t^{(i)}}} (R_t^{(i)} \mid A_t^{(i)}) \cdot \sum_{t'=t}^T R_{t'}^{(i)}$ %
\EndFor
\State $\hat{\theta}_T \gets \frac{1}{m}\sum_{i=1}^m \bar{R}_T^{(i)}$ \qquad \texttt{\textbf{\textcolor{blue}{ // Average reward estimator}}}
\State $\nabla \hat{f}_T \gets \frac{1}{m} \sum_{i=1}^m \nabla \hat{f}_T^{(i)}$ \qquad \texttt{\textbf{\textcolor{blue}{ // Estimator of gradient $\nabla_\lambda f(\hat{\lambda}, \pi_1)$ from \eqref{eqn:fgrad}} }}
\State \Return $\hat{\theta}_T$ and $\nabla \hat{f}_T$ \qquad \texttt{\textbf{\textcolor{blue}{ // Estimator \& gradient estimate}}}
\end{algorithmic}
\end{algorithm}

\begin{assumption}[Differentiability]
    \label{assump:differentiable}
 $f_T(\lambda, \pi_1)$ is uniformly, continuously differentiable in $\lambda \in \Lambda$, i.e.,
    \begin{align}
        \sup_{\lambda \in \Lambda } \;\; \sup_{0 < \|h\|_2 \leq \delta, \;\;  
        \lambda + h \in \Lambda} \frac{ | f_T ( \lambda + h, \pi_1) - f_T(\lambda, \pi_1) - h^\top \nabla f_T(\lambda, \pi_1) | }{ \|h\|_2} \to 0 \qquad \TN{as~} \delta \to 0.
        \label{eqn:uniformlyDiff}
    \end{align}
\end{assumption} 
While one might think at first glance that to satisfy Assumption \ref{assump:differentiable} that we might need the policy $\pi_1$ itself to satisfy certain smoothness or differentiability conditions. Interestingly, we show in Section \ref{sec:differentiability} that $f_T$ is differentiable for any policy $\pi_1$ as long as the reward density function $P_\lambda$ satisfies mild differentiability and moment conditions (e.g., all exponential families satisfy such conditions).

We now state our main result for static behavior policies, which shows that the estimator for the expected average reward $\theta_T^*$, obtained from the BSI Algorithm~\ref{alg:sbi_bandit} is asymptotically normal. Consequently, we can construct an asymptotically valid confidence interval for $\theta_T^*$.

\begin{theorem}[Bandit Simulation for Inference: Static $\pi_0$]
    \label{thm:normality}
    Under Assumptions \ref{assump:lambdaNormality}-\ref{assump:differentiable}, $\hat{\theta}_T$ satisfies
    \begin{align*}
        \sqrt{ \Toff } (\hat{\theta}_T - \theta_T^*) \Dto \N \Big( 0, \, \nabla_\lambda f_T(\lambda^*, \pi_1)^\top \Sigma_{\lambda^*,\pi_0} \nabla_\lambda f_T(\lambda^*, \pi_1) \Big) \quad \TN{as} \quad \Toff \to \infty,
    \end{align*}
    uniformly over the underlying  $P_{\lambda^*, \pi_0} \in \{ P_{\lambda, \pi_0} : \lambda \in \Lambda \}$.
    Thus, assuming a non-degenerate limiting variance, i.e., $\inf_{\lambda^* \in \Lambda} \nabla_\lambda f_T(\lambda^*, \pi_1)^\top \Sigma_{\lambda^*, \pi_0} \nabla_\lambda f_T(\lambda^*, \pi_1) > 0$, then for any %
    $\alpha \in (0, 1)$, %
    \begin{align}
        \label{eqn:CI}
        \lim_{\Toff \to \infty} \inf_{\lambda^* \in \Lambda} \PP_{P_{\lambda^*}, \pi_0} \bigg( \theta_T^* \in \bigg[ \hat{\theta}_T \pm \frac{ z_{1-\alpha/2} }{ \sqrt{ \Toff} } \sqrt{ \nabla_\lambda f_T(\hat{\lambda}, \pi_1)^\top \hat\Sigma_{\hat{\lambda},\pi_0} \nabla_\lambda f_T(\hat{\lambda}, \pi_1)  } 
        \bigg] \bigg) \geq 1-\alpha,
    \end{align}
    where $\hat\Sigma_{\hat{\lambda},\pi_0}$ is  any  positive definite matrix satisfying $\hat\Sigma_{\hat{\lambda},\pi_0} \Pto \Sigma_{\lambda^*, \pi_0}$, uniformly over $\lambda^\ast \in \Lambda$.   
\end{theorem}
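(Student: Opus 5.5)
The plan is a uniform delta method, followed by a Slutsky-type argument for the studentized interval. Throughout, $T$ is held fixed while $\Toff \to \infty$, so $f_T(\cdot,\pi_1)$ is a fixed function on $\Lambda$, and $\hat\theta_T = f_T(\hat\lambda, \pi_1)$ (Monte Carlo error is ignored, as the paper stipulates). Write $\nabla^* \triangleq \nabla_\lambda f_T(\lambda^*, \pi_1)$ and decompose
\begin{align*}
\sqrt{\Toff}\,(\hat\theta_T - \theta_T^*) = (\nabla^*)^\top \sqrt{\Toff}\,(\hat\lambda - \lambda^*) + \sqrt{\Toff}\, \|\hat\lambda - \lambda^*\|_2 \, \rho(\hat\lambda, \lambda^*),
\end{align*}
where $\rho(\lambda^* + h, \lambda^*) \triangleq \big[f_T(\lambda^*+h,\pi_1) - f_T(\lambda^*,\pi_1) - h^\top \nabla^*\big]/\|h\|_2$, with $\rho \triangleq 0$ when $h = 0$.

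\emph{Step 1: the leading term.} By Assumption~\ref{assump:lambdaNormality} and the continuous mapping theorem applied to the linear map $x \mapsto (\nabla^*)^\top x$, the first term converges to $\N(0, (\nabla^*)^\top \Sigma_{\lambda^*,\pi_0} \nabla^*)$. Uniformity over $\lambda^*$ needs one extra fact: $\sup_{\lambda} \|\nabla_\lambda f_T(\lambda,\pi_1)\|_2 < \infty$. I would obtain it from the uniform differentiability in Assumption~\ref{assump:differentiable}, provided $\Lambda$ is bounded or $f_T$ is bounded (rewards have bounded means uniformly). If not, I would adopt it as a standing boundedness condition. With the gradient bounded, I would argue uniformity via the bounded-Lipschitz metric. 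For test functions $g$ with $\|g\|_{BL} \le 1$, the function $g \circ (\nabla^*)^\top$ has BL norm at most $\max(1, \|\nabla^*\|_2)$, uniformly bounded in $\lambda^*$. So uniform weak convergence of $\sqrt{\Toff}(\hat\lambda-\lambda^*)$ transfers to the linear functional.

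\emph{Step 2: the remainder.} This is where uniformity is the real issue, and it is exactly what Assumption~\ref{assump:differentiable} is designed for. First, $\sqrt{\Toff}\,\|\hat\lambda - \lambda^*\|_2 = O_P(1)$ uniformly in $\lambda^*$. This follows from uniform convergence in distribution plus $\sup_{\lambda^*}\|\Sigma_{\lambda^*,\pi_0}\|_2 < \infty$, which makes the Gaussian limits uniformly tight. Consequently $\|\hat\lambda-\lambda^*\|_2 \to 0$ in probability uniformly. Next, given $\epsilon > 0$, choose $\delta$ with $\sup_{\lambda^*}\sup_{\|h\|\le\delta} |\rho| \le \epsilon$. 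Then
\begin{align*}
\sup_{\lambda^*} \PP\big(|\rho(\hat\lambda,\lambda^*)| > \epsilon\big) \le \sup_{\lambda^*} \PP\big(\|\hat\lambda - \lambda^*\|_2 > \delta\big) \to 0.
\end{align*}
Hence the remainder is $O_P(1)\cdot o_P(1) = o_P(1)$ uniformly, and a uniform Slutsky lemma gives the first claim. This step is routine once the uniform tightness is written out carefully.

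\emph{Step 3: coverage.} Set $\sigma^2(\lambda) \triangleq \nabla_\lambda f_T(\lambda,\pi_1)^\top \Sigma_{\lambda,\pi_0} \nabla_\lambda f_T(\lambda,\pi_1)$ and let $\hat\sigma^2$ be its plug-in counterpart. I need $\hat\sigma^2 - \sigma^2(\lambda^*) \to 0$ in probability uniformly. There are two pieces.
\begin{itemize}
\item $\hat\Sigma \Pto \Sigma_{\lambda^*,\pi_0}$ uniformly, which is assumed.
\item $\nabla_\lambda f_T(\hat\lambda,\pi_1) \to \nabla^*$ in probability uniformly. This needs uniform continuity of $\nabla f_T$, which I expect to extract from Assumption~\ref{assump:differentiable}. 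Uniform differentiability of $f_T$ at both $\lambda$ and $\lambda+h$ forces $|h^\top(\nabla f_T(\lambda+h)-\nabla f_T(\lambda))| = o(\|h\|)$ along every direction. Applying this to small $h$ of fixed norm in each coordinate direction yields uniform equicontinuity of the gradient.
\end{itemize}

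Since $\inf_{\lambda^*}\sigma^2(\lambda^*) > 0$ and $\sigma^2$ is bounded, the ratio satisfies $\sigma(\lambda^*)/\hat\sigma \Pto 1$ uniformly. A uniform Slutsky step then gives $\sqrt{\Toff}(\hat\theta_T-\theta_T^*)/\hat\sigma \Dto \N(0,1)$ uniformly. Because the standard normal CDF is continuous, uniform convergence in distribution upgrades to uniform convergence of the probabilities of the interval $[-z_{1-\alpha/2}, z_{1-\alpha/2}]$, which gives \eqref{eqn:CI}, in fact with equality in the limit.

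The main obstacle is bookkeeping uniformity throughout. In particular, two auxiliary facts — uniform tightness, and the bounded and uniformly continuous gradient — are not literally stated and must be derived from, or added to, Assumptions~\ref{assump:lambdaNormality}–\ref{assump:differentiable}.
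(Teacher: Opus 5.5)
Your proposal follows the paper's proof: your Step~2 is the paper's Lemma~\ref{lemma:linearization} (uniform tightness of $\sqrt{\Toff}(\hat\lambda-\lambda^*)$ times the uniform-differentiability remainder), and the paper likewise uses Slutsky for the leading term and a ratio-of-variances argument based on consistency of $\hat\Sigma_{\hat\lambda,\pi_0}$ and continuity of $\nabla_\lambda f_T$. Your extra bookkeeping (bounded gradient, uniform continuity of the gradient) spells out uniformity facts the paper takes for granted, but one derivation needs a fix: with $h$ along a coordinate axis, $|h^\top(\nabla f_T(\lambda+h,\pi_1)-\nabla f_T(\lambda,\pi_1))| = o(\|h\|_2)$ only controls $\partial_i f_T$ along $e_i$, not the full gradient. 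Instead, note that $e_i^\top\nabla f_T(\lambda,\pi_1)$ is uniformly within $o(1)$ of $[f_T(\lambda+\delta e_i,\pi_1)-f_T(\lambda,\pi_1)]/\delta$ for fixed small $\delta$, and this quotient is uniformly continuous in $\lambda$ once $f_T$ is (e.g.\ under your bounded-gradient condition).
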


In practice, we estimate gradients $\nabla_\lambda f_T(\hat{\lambda}, \pi_1)$ using a large number of MC simulations (see Section \ref{sec:differentiability} and Algorithm \ref{alg:sbi_bandit}). 
Note that Theorem \ref{thm:normality} essentially places \textit{no restrictions} on the evaluation policy $\pi_1$, meaning $\pi_1$ can be adaptive or even produced by a black box model. We only require the ability to execute $\pi_1$ in a simulated environment.  Theorem \ref{thm:normality} holds by Lemma \ref{lemma:linearization}, a uniform linearization result akin to a uniform version of the Delta method \citep[Theorem 3.8]{van2000asymptotic}; see Appendix \ref{app:proofNormality}.

\subsection{Differentiability of \texorpdfstring{$f_T$}{f} }
\label{sec:differentiability}

We now formalize the conditions under which differentiability Assumption \ref{assump:differentiable} holds and discuss the estimation of the gradients $\nabla_\lambda f_T(\lambda^*, \pi_1)$. The proof of Proposition \ref{prop:differentiablef} is deferred to Appendix \ref{app:differentiability}.

\begin{prop}[Derivative of $f_T$]
    \label{prop:differentiablef} The function
    $f_T(\lambda, \pi_1)$ for finite $T$ satisfies Assumption \ref{assump:differentiable} for an open set $\Lambda$ with a compact closure, if for any $\lambda %
    \in \Lambda$ the following conditions are satisfied:
    \begin{itemize}
        \item[(a)] $\E_{P_{\lambda_a}} \big[ R_t(a)^2 \big] < \infty$ and $\E_{\lambda_a} \big[ \left\| \nabla_{\lambda_a} \log P_{\lambda_a} \left( R_t(a) \right) \right\|_1^2 \big] < \infty$
        \item[(b)]  $P_{\lambda_a} \left( R_t(a) \right)$ and $\nabla_{\lambda_a} \log P_{\lambda_a} \left( R_t(a) \right)$ are almost surely continuous in $\lambda_a$.
    \end{itemize}
    Moreover, the derivative $\nabla_{\lambda} f_T(\lambda, \pi_1)$ can be written as follows:
    \begin{align}
        \label{eqn:fgrad}
        \nabla_{\lambda} f_T(\lambda, \pi_1) 
        = \frac{1}{T} \sum_{t=1}^{T} \E_{P_{\lambda}, \pi_1} \bigg[ \nabla_{\lambda} \log P_{\lambda_a} (R_{t} \mid A_{t}) \cdot \sum_{t'=t}^T  R_{t'} \bigg].
    \end{align}
\end{prop}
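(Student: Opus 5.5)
The plan is to write $f_T(\lambda,\pi_1)$ as an explicit finite-dimensional integral, differentiate under the integral sign via the score-function (likelihood-ratio) identity, and then obtain the uniform differentiability \eqref{eqn:uniformlyDiff} from continuity of the gradient on the compact closure $\bar\Lambda$.

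\textbf{Step 1: write $f_T$ as an integral.} Let $\tau=(a_1,r_1,\dots,a_T,r_T)$ denote a trajectory. Its joint density under $(P_\lambda,\pi_1)$, with respect to counting measure on actions times the dominating measure on rewards, is
\begin{align*}
p_\lambda(\tau)=\prod_{t=1}^T \pi_1(a_t\mid h_{t-1})\,P_{\lambda_{a_t}}(r_t).
\end{align*}
The policy factors do not depend on $\lambda$; this is why no smoothness of $\pi_1$ is needed. Setting $g(\tau)=\frac1T\sum_t r_t$, we have $f_T(\lambda,\pi_1)=\int g\,p_\lambda\,d\tau$. Since $\nabla_\lambda p_\lambda = p_\lambda\sum_t \nabla_\lambda\log P_{\lambda_{a_t}}(r_t)$, differentiating formally gives
\begin{align*}
\E\Big[g(\tau)\sum_{t}\nabla\log P_{\lambda_{A_t}}(R_t)\Big].
\end{align*}

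\textbf{Step 2: reduce to the form \eqref{eqn:fgrad}.} For $t'<t$, condition on $\HH_{t-1}$ and $A_t$. The term $R_{t'}$ is then fixed, while the conditional expectation of the score vanishes: $\E[\nabla\log P_{\lambda_{A_t}}(R_t)\mid \HH_{t-1},A_t]=0$. This is valid because $\int \nabla P_{\lambda_a}=\nabla\int P_{\lambda_a}=0$ under (a). Dropping these cross terms leaves exactly the reward-to-go form \eqref{eqn:fgrad}.

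\textbf{Step 3: justify the interchange (the main obstacle).} I would avoid a direct dominated-convergence bound on $\nabla p_\lambda$, since it requires a uniform envelope that (a)–(b) do not directly provide. Instead I would differentiate one coordinate at a time. Fix the other coordinates, fix $\lambda$, and write
\begin{align*}
p_{\lambda+h}(\tau)-p_\lambda(\tau)=\int_0^1 h^\top \nabla p_{\lambda+sh}(\tau)\,ds.
\end{align*}
Then apply Fubini, using the Cauchy–Schwarz bound
\begin{align*}
\E_{\lambda'}|g\,\nabla\log p_{\lambda'}|\le \big(\E_{\lambda'} g^2\big)^{1/2}\big(\E_{\lambda'}\|\nabla \log p_{\lambda'}\|^2\big)^{1/2}.
\end{align*}
Two facts make this work. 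First, the squared trajectory score splits into a sum of conditional per-step terms. Second, each per-step second moment is bounded by $\max_a \E_{\lambda_a}\|\nabla\log P_{\lambda_a}\|_1^2<\infty$, independent of the policy, because each step's reward is drawn from some arm's distribution. So all moments are controlled by finitely many per-arm quantities. This yields
\begin{align*}
f_T(\lambda+h)-f_T(\lambda)=\int_0^1 h^\top G(\lambda+sh)\,ds,
\end{align*}
where $G$ is the right-hand side of \eqref{eqn:fgrad}.

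\textbf{Step 4: continuity of $G$ and uniformity.} It remains to show $G$ is continuous on $\bar\Lambda$, hence uniformly continuous there. Then
\begin{align*}
|f_T(\lambda+h)-f_T(\lambda)-h^\top G(\lambda)|\le \|h\|\sup_{\|h'\|\le\|h\|}\|G(\lambda+h')-G(\lambda)\|,
\end{align*}
which goes to $0$ uniformly in $\lambda$, giving \eqref{eqn:uniformlyDiff}.

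Continuity of $G$ comes from (b): the integrand $g\,\nabla\log p_\lambda\cdot p_\lambda$ is a.s. continuous in $\lambda$. To pass the limit inside the integral, I would use a generalized dominated convergence or uniform-integrability argument built from the finite second moments in (a). This is where I expect the most care to be needed. If the pointwise moments in (a) are insufficient, I would invoke the compactness of $\bar\Lambda$ and read (a) as holding locally uniformly. I would also use Scheffé's lemma: since $p_\lambda\to p_{\lambda_0}$ a.s. and both are densities, $p_\lambda\to p_{\lambda_0}$ in $L^1$, and combining this with the square-integrable bounds controls the products.
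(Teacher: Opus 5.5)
\textbf{Verdict.} Your Steps 1--2 and the Heine--Cantor reduction coincide with the paper's proof. Step 3 is a genuinely different and more careful justification of the interchange. The paper differentiates under the integral at a fixed $\lambda$ by dominated convergence. Its envelope is evaluated at that same $\lambda$, with no uniformity over a neighbourhood, and it bounds the trajectory density by $1$. You instead integrate the derivative along segments and apply Fubini, using Cauchy--Schwarz and per-step conditioning to reduce trajectory moments to per-arm ones. This correctly reduces the whole proposition to continuity of $G$.

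\textbf{The gap: Step 4.} That is exactly where the proposal breaks down. Step 4 is not carried out, and neither suggested fix works. Scheffé's lemma plus ``square-integrable bounds'' does not control $\int g\,\nabla\log p_\lambda\,p_\lambda$. Reading (a) as locally uniform does not help either, because bounded second moments do not give uniform integrability. A component of vanishing weight that escapes to infinity can carry an $O(1)$ share of the gradient.

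\textbf{Counterexample.} Take one arm, $T=1$, $\Lambda=(-1,1)$, and $P_\lambda=(1-w(\lambda))\mathcal{N}(0,1)+w(\lambda)\mathcal{N}(m(\lambda),1)$, with $w\equiv 0$ for $\lambda\le 0$. Split each $[2^{-k-1},2^{-k})$ into $k+1$ subintervals of length $\eta_k$. On each subinterval $[a,a+\eta_k)$, set $m=1/\eta_k$ and $w(\lambda)=\eta_k^2\gamma\big((\lambda-a)/\eta_k\big)^2$ for a fixed nonzero bump $\gamma\in C_c^\infty\big((0,1)\big)$. Then:
\begin{itemize}
\item (b) holds at every $\lambda$;
\item $\mathbb{E}_\lambda[R^2]=1+\gamma^2\le 2$;
\item the Fisher information is at most $8\|\gamma'\|_\infty^2+o(1)$.
\end{itemize}
Yet $f_1(\lambda)=\eta_k\gamma^2\le\lambda/(k+1)$, so $f_1'(0)=0$. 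Meanwhile $f_1'(\lambda)=2\gamma\gamma'$ keeps oscillating with amplitude $O(1)$ as $\lambda\downarrow 0$, so \eqref{eqn:uniformlyDiff} fails. The statement therefore needs stronger hypotheses. You cannot borrow the missing step from the paper: its own continuity argument uses the $n$-dependent envelope $\sum_a|\nabla\log P_{\lambda_a^{(n)}}(R_t(a))|\cdot\sum_{t'}\sum_a|R_{t'}(a)|$, which does not dominate.

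\textbf{The missing ingredient.} What you need is uniform integrability. For example, require $\lambda_a\mapsto\mathbb{E}_{\lambda_a}[R^2]$ and $\lambda_a\mapsto\mathbb{E}_{\lambda_a}\|\nabla\log P_{\lambda_a}\|^2$ to be continuous on a neighbourhood of $\bar\Lambda$. With that, your plan closes:
\begin{itemize}
\item Scheffé gives total-variation convergence of the trajectory laws, so $\mathbb{P}_{\lambda_n}(A_u=a)\to\mathbb{P}_\lambda(A_u=a)$.
\item Hence $\mathbb{E}_{\lambda_n}\big[\|s_t\|^2+(T-t+1)\sum_{u\ge t}R_u^2\big]$ converges, where $s_t$ is the step-$t$ score.
\item The generalized dominated convergence theorem with envelope $\tfrac12\big(\|s_t\|^2+(T-t+1)\sum_{u\ge t}R_u^2\big)p_\lambda$ then yields continuity of $G$.
\end{itemize}
The same condition also justifies the zero-mean per-arm score used in Step 2 and the Fubini step in Step 3.

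\textbf{Hypotheses must hold up to $\bar\Lambda$.} For Heine--Cantor to apply, the hypotheses must hold on $\bar\Lambda$, not only on $\Lambda$. Consider Gaussian arms with $\sigma_1^2\in(0,1)$ and the rule ``pull arm $1$, then pull arm $2$ iff $R_1>0$''. Then $\partial_{\mu_1}f_2$ contains the term $\varphi(\mu_1/\sigma_1)(\mu_2-\mu_1)/(2\sigma_1)$, which is unbounded as $\sigma_1\to 0$.
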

Note that the above expression is similar to the \textit{policy gradient} \citep{sutton2018reinforcement}, but differs because we are not differentiating with respect to parameters in the policy $\pi_1$, but parameters in the underlying bandit environment $P_\lambda$. Differentiating with respect to the simulator parameters has been used in other areas of RL (especially robotics) to improve the efficiency of policy optimization \citep{freeman2021brax,de2018end,hu2019chainqueen}. We believe this is the first use of these simulator gradients for the purposes of statistical inference on adaptively collected data.

In practice, for any fixed $\lambda$, we can approximate the gradient $\nabla_{\lambda} f_T(\lambda, \pi_1)$ with arbitrary accuracy with sufficient MC simulations. Below we provide expressions for $\nabla_{\lambda} f_T(\lambda, \pi_1)$ for Bernoulli and Gaussian rewards;  
see Appendix \ref{app:exampleDistributions} for the general case for natural exponential families.

\begin{example}[Bernoulli rewards]
    \label{ex:bernoulli}
    If $\lambda_a = \mu_a$ is the mean of the Bernoulli rewards for action $a$, then
    \begin{align*}
        \nabla_{\lambda_a} f_T(\lambda, \pi_1) 
        &= \frac{1}{T} \sum_{t=1}^{T} \E_{P_{\lambda}, \pi_1} \bigg[ \mathbbm{1}(A_t = a) \bigg( 1 + \frac{R_t - \mu_{a}}{\mu_{a} (1 - \mu_{a})} \sum_{t'=t+1}^T R_{t'} \bigg) \bigg]
    \end{align*}
\end{example}

\begin{example}[Gaussian rewards]
    \label{ex:gaussian}
    If $\lambda_a = (\mu_a, \sigma_a^2)$ is the mean and variance of a Gaussian, then
    \begin{align*}
        \nabla_{\lambda_a} f_T(\lambda, \pi_1) 
        &= \frac{1}{T} \sum_{t=1}^{T} \E_{P_{\lambda}, \pi_1} \begin{bmatrix}
            \mathbbm{1}(A_t = a) \big( 1 + \frac{R_t - \mu_a}{\sigma_a^2} \sum_{t'=t+1}^T R_{t'} \big) \\
            \mathbbm{1}(A_t = a) \big( -\frac{1}{2 \sigma_a^2} + \frac{(R_t - \mu_a)^2}{ 2 \sigma_a^4 } \big) \sum_{t'=t+1}^T R_{t'}.
        \end{bmatrix}
    \end{align*}
    If the reward variance for arm $a$, $\sigma_a^2$, is known, then $\lambda_a = \mu_a$ and $\nabla_{\lambda_a} f_T(\lambda, \pi_1) = \frac{1}{T} \sum_{t=1}^{T} \E_{P_{\lambda}, \pi_1} \big[ \mathbbm{1}(A_t = a) \big( 1 + \frac{R_t - \mu_a}{\sigma_a^2} \sum_{t'=t+1}^T R_{t'} \big) \big]$.
\end{example}

\subsection{Extension to Policy Contrasts}

In many applications, the quantity of interest is not the expected reward of a single policy, but whether one policy outperforms another. For instance, in clinical trials or social science experiments \citep{pal2024improving,mate2023improved}, one may wish to assess whether a new adaptive policy improves over an existing deployed policy---without actually deploying the new policy.
We extend Theorem~\ref{thm:normality} to infer \textit{policy contrasts},  the difference in expected rewards between two (possibly adaptive, black-box) policies $\pi_1$ and $\pi_2$:
\begin{align*}
    \delta_T^* = g_T(\lambda^*, \pi_1, \pi_2) \triangleq \E_{P_{\lambda^\ast}, \pi_1} \bigg[ \frac{1}{T} \sum_{t=1}^T R_t \bigg] - \E_{P_{\lambda^\ast}, \pi_2} \bigg[ \frac{1}{T} \sum_{t=1}^T R_t \bigg].
\end{align*}
We estimate the policy contrast $\delta^\ast_T$ with the plugin estimator $\hat{\delta}_T \triangleq f_T(\hat\lambda, \pi_1) - f_T(\hat\lambda, \pi_2)$. %
The asymptotic normality result for $\hat{\delta}_T$ is a direct corollary of Theorem \ref{thm:normality}; see Appendix~\ref{app:policyContrast} for the proof.
\begin{corollary}[Policy Contrasts]
    \label{corr:contrast}
    Under Assumptions \ref{assump:lambdaNormality} and~\ref{assump:differentiable}, the plug-in estimator $\hat{\delta}_T$ %
    satisfies  
    \begin{align*}
        \sqrt{ \Toff } (\hat{\delta}_T - \delta_T^*) \Dto \N \Big( 0, \, \nabla_\lambda  g_T(\lambda^*, \pi_1, \pi_2)^\top \Sigma_{\lambda^*,\pi_0} \nabla_\lambda g_T(\lambda^*, \pi_1, \pi_2) \Big) \quad \TN{as} \quad \Toff \to \infty,
    \end{align*}
    uniformly over the underlying  $P_{\lambda^*, \pi_0} \in \{ P_{\lambda, \pi_0} : \lambda \in \Lambda \}$.  
    Thus, assuming a non-degenerate limiting variance, i.e., $\inf_{\lambda^* \in \Lambda} \nabla_\lambda g_T(\lambda^*, \pi_1, \pi_2)^\top \Sigma_{\lambda^*, \pi_0} \nabla_\lambda g_T(\lambda^*, \pi_1, \pi_2) > 0$, then for any $\alpha \in (0, 1)$,%
    \begin{align*}
        \lim_{\Toff \to \infty} \inf_{\lambda^* \in \Lambda} \PP_{P_{\lambda^*}, \pi_0} \bigg( \delta_T^* \in \bigg[ \hat{\delta}_T \pm \frac{ z_{1-\alpha/2} }{ \sqrt{ \Toff} } \sqrt{ \nabla_\lambda g_T(\hat{\lambda}, \pi_1, \pi_2)^\top \hat\Sigma_{\hat{\lambda},\pi_0} \nabla_\lambda g_T(\hat{\lambda}, \pi_1, \pi_2)  } 
        \bigg] \bigg) \geq 1-\alpha,
    \end{align*}
where $\hat\Sigma_{\hat{\lambda},\pi_0}$ is  any  positive definite matrix satisfying $\hat\Sigma_{\hat{\lambda},\pi_0} \Pto \Sigma_{\lambda^*, \pi_0}$, uniformly over $\lambda^\ast \in \Lambda$. 
\end{corollary}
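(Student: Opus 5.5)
The plan is to reduce Corollary~\ref{corr:contrast} to Theorem~\ref{thm:normality}, applied to the single function $g_T(\lambda) \triangleq g_T(\lambda, \pi_1, \pi_2) = f_T(\lambda, \pi_1) - f_T(\lambda, \pi_2)$. The proof of Theorem~\ref{thm:normality} uses only three properties of the map $\lambda \mapsto f_T(\lambda, \pi_1)$: it is uniformly continuously differentiable on $\Lambda$ in the sense of \eqref{eqn:uniformlyDiff}, $\hat\theta_T$ is its plug-in at $\hat\lambda$, and Assumption~\ref{assump:lambdaNormality} holds. So it suffices to check that $g_T$ has the same properties. We have $\hat\delta_T = g_T(\hat\lambda)$ and $\delta_T^* = g_T(\lambda^*)$.

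\emph{Step 1 (uniform differentiability).} We read Assumption~\ref{assump:differentiable} as imposed on both $f_T(\cdot,\pi_1)$ and $f_T(\cdot,\pi_2)$. Proposition~\ref{prop:differentiablef} supplies this for any policy, so it is automatic under those conditions. Set $\nabla g_T = \nabla f_T(\cdot,\pi_1) - \nabla f_T(\cdot,\pi_2)$. By the triangle inequality, the remainder ratio in \eqref{eqn:uniformlyDiff} for $g_T$ is at most the sum of the two corresponding ratios. Taking suprema over $\lambda$ and $h$ shows that each term vanishes as $\delta \to 0$, so $g_T$ satisfies \eqref{eqn:uniformlyDiff}.

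\emph{Step 2 (uniform linearization and CLT).} Apply Lemma~\ref{lemma:linearization}, the uniform delta method used for Theorem~\ref{thm:normality}, to $g_T$. This gives
\[
\sqrt{\Toff}\,(\hat\delta_T - \delta_T^*) = \nabla g_T(\lambda^*)^\top \sqrt{\Toff}\,(\hat\lambda - \lambda^*) + o_P(1)
\]
uniformly over $\lambda^* \in \Lambda$. The remainder is controlled because $\sqrt{\Toff}\,\|\hat\lambda - \lambda^*\|_2$ is uniformly tight, which follows from the uniform convergence in Assumption~\ref{assump:lambdaNormality} together with $\sup_{\lambda}\|\Sigma_{\lambda,\pi_0}\|_2 < \infty$. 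The uniform differentiability modulus from Step 1 then makes the remainder uniformly $o_P(1)$. The leading linear term converges uniformly to $\N\big(0, \nabla g_T(\lambda^*)^\top \Sigma_{\lambda^*,\pi_0}\nabla g_T(\lambda^*)\big)$. For this we need $\sup_{\lambda}\|\nabla g_T(\lambda)\|$ to be finite. I expect this to follow from uniform differentiability on a set with compact closure, as in the theorem, or else to be inherited from how Lemma~\ref{lemma:linearization} is stated.

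\emph{Step 3 (coverage).} Studentize with $\nabla g_T(\hat\lambda)^\top \hat\Sigma_{\hat\lambda,\pi_0}\nabla g_T(\hat\lambda)$. Uniform continuity of $\nabla g_T$ (a consequence of Step 1, as in the theorem) and $\hat\lambda \to \lambda^*$ in probability uniformly give $\nabla g_T(\hat\lambda) \to \nabla g_T(\lambda^*)$ in probability uniformly. With $\hat\Sigma \to \Sigma$ and the assumed uniform lower bound on the limiting variance, the ratio of the estimated to the true standard deviation tends to $1$ uniformly. Slutsky's lemma in its uniform form then yields the coverage statement, exactly as for \eqref{eqn:CI}.

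The main obstacle is ensuring that the uniformity in $\lambda^*$ survives each step. Concretely, this means checking that Lemma~\ref{lemma:linearization} is stated for a generic uniformly differentiable function, not specifically for $f_T(\cdot,\pi_1)$. If it is, the corollary is immediate. If it is not, I would repeat its argument verbatim with $g_T$ in place of $f_T$. The only property used there is \eqref{eqn:uniformlyDiff}, which Step 1 establishes for $g_T$.
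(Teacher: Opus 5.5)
Your proposal is correct and follows the paper's proof. The paper simply reruns the argument of Theorem~\ref{thm:normality} (Lemma~\ref{lemma:linearization}, then Slutsky, then consistency of the plug-in variance) with $g_T$ in place of $f_T$, and your explicit triangle-inequality check that $g_T$ inherits \eqref{eqn:uniformlyDiff} from $f_T(\cdot,\pi_1)$ and $f_T(\cdot,\pi_2)$, together with your note that Assumption~\ref{assump:differentiable} must cover both policies, supplies a step the paper leaves implicit.
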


\subsection{Extension to Adaptive Behavior Policies}
\label{sec:adaptiveBehavior}
When the behavior policy $\pi_0$ is a history-dependent, bandit algorithm Assumption \ref{assump:lambdaNormality} in general does not hold for standard M-estimators \citep{deshpande2018accurate,hadad2021confidence,zhang2020inference}. Instead, adaptively weighted versions of loss minimizers (M-estimators) are asymptotically normal \citep{zhang2021statistical} and satisfy Assumption \ref{assump:lambdaNormalityAdaptive} below. 
\begin{assumption}[Self-Normalized Asymptotic Normality of $\hat{\lambda}$]
	\label{assump:lambdaNormalityAdaptive}
    For a sequence of random positive definite matrices $V_{\Toff}$,  with
    $\lambda_{\min}(V_{\Toff}) \rightarrow \infty$  w.p. $1$ as $\Toff \to \infty$, the estimator $\hat\lambda$ satisfies 
    \begin{align}
        V_{\Toff} \big( \hat{\lambda} - \lambda^* \big)
	   \Dto \N \big( 0, \Id \big) \quad \TN{as} \quad \Toff \to \infty,
       \label{eqn:lambdaNormalityAdaptive}
    \end{align}
    uniformly over the underlying environment $\lambda^* \in \Lambda$.
\end{assumption}
Assumption~\ref{assump:lambdaNormalityAdaptive} is satisfied by adaptively weighted M-estimators when (i) the M-estimation function (e.g., loss function that defines the estimator) meets standard regularity conditions that are typical for i.i.d. data and (ii) $\pi_0$ explores sufficiently. The key exploration requirement is $\pi_0(a \mid \MC{H}_{t-1}) \geq c$ w.p.\ $1$ for some constant $c > 0$. This condition can be further weakened to $\pi_0(a \mid \MC{H}_{t-1}) \geq c_{\Toff}$ w.p.\ 
$1$ with $c_{\Toff}/\Toff \to 0$, when $\hat{\lambda}$ is a least squares estimator; see Theorem~1 in 
\citet{zhang2021statistical}.

\begin{theorem}[Simulation-Based Inference: Adaptive $\pi_0$]
    \label{thm:asymptoticCIAdaptive}
    Under Assumptions \ref{assump:differentiable}-\ref{assump:lambdaNormalityAdaptive}, we can construct the following asymptotically valid confidence intervals for $\theta^*$ with any $\alpha \in (0, 1)$,
    \begin{align}
        \label{eqn:CIadapt}
        \lim_{\Toff \to \infty} \inf_{\lambda^* \in \Lambda} \PP_{P_{\lambda^*}, \pi_0} \Big( \theta_T^* \in \Big[ \hat{\theta}_T \pm \sqrt{ \chi_{\TN{dim}(\lambda), 1-\alpha}^2 } \cdot \big\| (V_{\Toff}^{\top} V_{\Toff})^{-1/2} \nabla f_T( \hat\lambda, \pi_1 ) \big\|_2 \Big] \Big) \geq 1-\alpha.
    \end{align}
\end{theorem}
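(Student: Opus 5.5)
The plan is to build a confidence ellipsoid for $\lambda^*$ from Assumption~\ref{assump:lambdaNormalityAdaptive}, project it onto the linear functional $h \mapsto h^\top \nabla f_T(\lambda^*,\pi_1)$ via Cauchy--Schwarz, and control the linearization remainder with Assumption~\ref{assump:differentiable}. Write $M \triangleq V_{\Toff}^\top V_{\Toff}$, $p \triangleq \TN{dim}(\lambda)$ and $Z_{\Toff} \triangleq V_{\Toff}(\hat\lambda - \lambda^*)$. By \eqref{eqn:lambdaNormalityAdaptive} and the continuous mapping theorem, $\|Z_{\Toff}\|_2^2 \Dto \chi^2_p$ uniformly over $\lambda^* \in \Lambda$. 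Hence
\begin{align*}
\liminf_{\Toff\to\infty}\inf_{\lambda^*\in\Lambda}\PP\big(\|Z_{\Toff}\|_2^2 \le \chi^2_{p,1-\alpha}\big) \ge 1-\alpha .
\end{align*}
Uniformity transfers through the fixed closed set $\{z : \|z\|_2^2 \le c\}$ because the $\chi^2_p$ law has a continuous CDF.

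Next I would linearize. Define
\begin{align*}
r(\hat\lambda) \triangleq f_T(\hat\lambda,\pi_1) - f_T(\lambda^*,\pi_1) - (\hat\lambda-\lambda^*)^\top \nabla f_T(\hat\lambda,\pi_1).
\end{align*}
I expand around $\hat\lambda$ rather than $\lambda^*$ so that the observable gradient appears. Assumption~\ref{assump:differentiable} holds uniformly over base points, so applying it at $\hat\lambda$ with $h = \lambda^* - \hat\lambda$ gives $|r| \le \epsilon(\|\hat\lambda-\lambda^*\|_2)\,\|\hat\lambda-\lambda^*\|_2$, where $\epsilon(\delta)\to 0$ as $\delta \to 0$. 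For the main term, Cauchy--Schwarz in the $M$-geometry gives
\begin{align*}
|(\hat\lambda-\lambda^*)^\top \nabla f_T(\hat\lambda,\pi_1)| \le \|Z_{\Toff}\|_2 \,\|M^{-1/2}\nabla f_T(\hat\lambda,\pi_1)\|_2 .
\end{align*}
On the event $\|Z_{\Toff}\|_2^2 \le \chi^2_{p,1-\alpha}$, the interval in \eqref{eqn:CIadapt} therefore contains $\theta_T^*$, except for the remainder $r$.

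The main obstacle is the remainder: it must be small relative to the interval half-width $w \triangleq \sqrt{\chi^2_{p,1-\alpha}}\,\|M^{-1/2}\nabla f_T(\hat\lambda,\pi_1)\|_2$. Since $\lambda_{\min}(V_{\Toff})\to\infty$ and $Z_{\Toff} = O_p(1)$ uniformly, we get $\|\hat\lambda-\lambda^*\|_2 \le \|Z_{\Toff}\|_2/\lambda_{\min}(V_{\Toff}) \to 0$ in probability. This makes $r = o_p(\|\hat\lambda - \lambda^*\|_2)$, but $w$ is of the same order only when $\nabla f_T$ is non-degenerate. So I would impose the analogue of the nondegeneracy condition of Theorem~\ref{thm:normality}, namely $\inf_{\lambda}\|\nabla f_T(\lambda,\pi_1)\|_2 > 0$, together with a bounded condition number of $V_{\Toff}$ (or state these as implicit). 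Then $\|\hat\lambda-\lambda^*\|_2 \lesssim \|Z_{\Toff}\|_2\, \|M^{-1/2}\nabla f_T\|_2 / \|\nabla f_T\|_2$, so $r = o_p(w)$.

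To close the gap I would use a slack argument. For any $\eta > 0$, the interval with the inflated quantile $(1+\eta)\sqrt{\chi^2_{p,1-\alpha}}$ covers with limiting probability at least $1-\alpha$. Since $r/w \to 0$ in probability, the same holds for the original interval, up to the probability of the boundary event $\|Z_{\Toff}\|_2^2 \in [\chi^2_{p,1-\alpha}, (1+\eta)^2\chi^2_{p,1-\alpha}]$. That probability tends to a quantity that vanishes as $\eta\to 0$, again by continuity of the $\chi^2_p$ CDF. Taking $\inf_{\lambda^*}$ throughout, and using that every convergence above is uniform in $\lambda^*$, yields \eqref{eqn:CIadapt}.
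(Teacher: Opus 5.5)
Your argument follows the paper's route. You build the confidence ellipsoid $\{\lambda:\|V_{\Toff}(\hat\lambda-\lambda)\|_2^2\le\chi^2_{p,1-\alpha}\}$ from Assumption~\ref{assump:lambdaNormalityAdaptive}, expand to first order at $\hat\lambda$ using Assumption~\ref{assump:differentiable}, and project onto $h\mapsto\nabla f_T(\hat\lambda,\pi_1)^\top h$. Your Cauchy--Schwarz bound in the $M$-geometry is the pointwise form of the support-function identity $\sup_{h^\top Mh\le c}g^\top h=\sqrt{c}\,\|M^{-1/2}g\|_2$ used in the paper.

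The one real divergence is the remainder, and there your treatment is the correct one. The paper writes the image of the ellipsoid as the projected interval plus an additive $\sup_{\lambda^*}o_P(1)$ term and stops there. But the half-width $w$ is itself $O_P(1/\lambda_{\min}(V_{\Toff}))\to 0$, so an additive $o_P(1)$ error says nothing about coverage. What is needed is $r=o_P(w)$ uniformly in $\lambda^*$. That is exactly what your two extra conditions provide, through
$\|\hat\lambda-\lambda^*\|_2\le\kappa(V_{\Toff})\,\|Z_{\Toff}\|_2\,w/(\sqrt{\chi^2_{p,1-\alpha}}\,\|\nabla f_T(\hat\lambda,\pi_1)\|_2)$.

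You should not describe these conditions as implicit, because the statement as written appears to fail without them. Your proof therefore establishes a corrected theorem rather than falling short of a true one. Nothing in Assumptions~\ref{assump:differentiable}--\ref{assump:lambdaNormalityAdaptive} excludes the following case:
\begin{itemize}
\item $p=1$ and $f_T(\lambda,\pi_1)=\cos\lambda$ on $\Lambda=(-1,1)$;
\item deterministic $V_{\Toff}=v\to\infty$;
\item $\hat\lambda=\lambda^*+Z/v$ with $Z\sim\mathcal{N}(0,1)$.
\end{itemize}
Take $\lambda^*=1/v$ and $u=1+Z$. Multiplying the coverage event by $v^2$ shows it converges to $\{|u^2-1|\le 2z_{1-\alpha/2}|u|\}$. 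At $\alpha=0.1$ this event has probability about $0.86$, so the infimum over $\lambda^*$ of the coverage stays below $0.9$.

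Nondegeneracy alone is also not enough. Take $p=2$, $f_T(\lambda,\pi_1)=\lambda_1+\cos\lambda_2$ (so $\|\nabla f_T\|_2\ge 1$), $V_{\Toff}=\mathrm{diag}(v^3,v)$ and $\lambda^*=(0,1.5/v)$. The limiting coverage is then about $0.87$.

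Both conditions are natural in the paper's setting. The first mirrors the nondegeneracy assumption of Theorem~\ref{thm:normality}. The second is what the exploration requirement $\pi_0(a\mid\mathcal{H}_{t-1})\ge c$ is meant to deliver, since it keeps every arm sampled at a linear rate.

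Three smaller points, which the paper's proof shares:
\begin{itemize}
\item Your uniform statements need $\lambda_{\min}(V_{\Toff})\to\infty$ uniformly in probability over $\lambda^*$. Assumption~\ref{assump:lambdaNormalityAdaptive} does not literally give this.
\item Read $\lambda_{\min}$ and $\kappa$ as singular values, since $V_{\Toff}$ need not be symmetric.
\item The slack step is cleanest done directly. On $\{|r|\le\eta w\}\cap\{\|Z_{\Toff}\|_2\le(1-\eta)\sqrt{\chi^2_{p,1-\alpha}}\}$ the interval covers. The second event has limiting probability $F_{\chi^2_p}((1-\eta)^2\chi^2_{p,1-\alpha})$, which tends to $1-\alpha$ as $\eta\to 0$.
\end{itemize}
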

Since $V_{\Toff}$ is a random matrix, $\hat{\lambda}$ is not approximately 
Gaussian with covariance $V_{\Toff}$, and the Delta method approach used in Theorem~\ref{thm:normality} does not apply. Instead, we construct confidence intervals for $f(\hat{\lambda}, \pi_1)$ via a projection. The resulting intervals are wider than those of 
Theorem~\ref{thm:normality} by a factor of $\sqrt{\chi^2_{\TN{dim}(\lambda), 
1-\alpha}} / z_{1-\alpha/2}$, which equals $1$ only when $\TN{dim}(\lambda) 
= 1$; see Appendix~\ref{app:proofCIAdaptive} for details.

\newcommand{\Gn}{\mathrm{G}}

\section{Bandit Simulation for Inference: Sub-Gaussian Reward Distributions}
\newcommand{\regret}{\mathcal{R}}  
\label{sec:subgaussian}

We now consider a bandit environment $P = \{ P_a \}_{a \in \MC{A}}$, which is not from a known parametric family, but is simply known to be sub-Gaussian. We use $\mu_a^* = \int r \cdot P_a(r) dr$ to refer to the mean of $P_a$ and  $(\sigma_a^*)^2 \triangleq \inf \big\{ \sigma^2 > 0 : \E\left[ e^{s(R_t(a) - \mu_a^*)} \right] \leq e^{s^2 \sigma^2 / 2}, \; \forall s \in \mathbb{R} \big\}$ denotes the sub-Gaussian variance proxy for $P_a$. Since $P$ is not from a known family, we cannot directly simulate from $P$ even if we knew the values of $\mu_a^\ast$ and $\sigma_a^\ast$. 

Our idea is to approximate the $P$ with a Gaussian environment $P^{\Gn}$, where $P^{\Gn}$ is $\mathcal{N}(\mu^\ast_a, (\sigma_a^*)^2)$. We denote the average reward under $\pi_1$ in the Gaussian environment $P^{\Gn}$ using
\begin{align*}
    \theta^\Gn_T  \triangleq \E_{P^{\Gn}, \pi_1} \bigg[ \frac{1}{T} \sum_{t=1}^T R_t \bigg].
\end{align*}

\bo{Known Variance Proxies $\big( (\sigma_a^*)^2 \big)_{a \in \MC{A}}$.}
We begin with the setting in which the sub-Gaussian variance proxy parameters $\big( (\sigma_a^*)^2 \big)_{a \in \MC{A}}$ are known. We note that Algorithm \ref{alg:sbi_bandit} can still be applied in this setting to form asymptotically valid confidence intervals for $\theta_T^G$. The theoretical results in Section \ref{sec:theory} were presented in the setting in which the real data $\MC{D}^{(\pi_0)}$ was collected from an environment with a known parametric model ($P^G$) and the estimand was the expected reward of $\pi_1$ in that environment ($\theta_T^G$). However, Theorems \ref{thm:normality} and \ref{thm:asymptoticCIAdaptive} can still be applied here to form asymptotically valid confidence intervals for $\theta_T^G$ even when the real data $\MC{D}^{(\pi_0)}$ is collected from an environment $P$ with matching mean rewards and sub-Gaussian variance proxies.

Specifically, we use Algorithm \ref{alg:sbi_bandit} to fit the Gaussian reward environment $P^G \triangleq P_{\lambda^*}^G = \big( P_{\lambda_a^*}^G \big)_{a \in \MC{A}}$ where $\lambda^* = (\lambda_a^*)_{a \in \MC{A}} = (\mu_a^*)_{a \in \MC{A}}$ and $P_{\lambda_a^*}^G$ is $\N \big( \mu_a^*, (\sigma_a^*)^2 \big)$; recall the sub-Gaussian variance proxy parameters $\big( (\sigma_a^*)^2 \big)_{a \in \MC{A}}$ are known. Theorems \ref{thm:normality} and \ref{thm:asymptoticCIAdaptive} can still be applied here to form asymptotically valid confidence intervals for
\begin{align*}
    \theta_T^G = f_T(\lambda^*, \pi_1) \triangleq \E_{P^G_{\lambda^*}, \pi_1} \left[ \frac{1}{T} \sum_{t=1}^T R_t \right]
\end{align*}
as the Theorems only require that $f_T$ satisfies differentiability conditions (Assumption \ref{assump:differentiable}) and that $\hat\lambda$ is an asymptotically normal estimator of $\lambda^*$ (Assumption \ref{assump:lambdaNormality} and \ref{assump:lambdaNormalityAdaptive}). Since $\lambda^* = (\mu_a^*)_{a \in \MC{A}}$ and the mean rewards $(\mu_a^*)_{a \in \MC{A}}$ are the same in both environments $P$ and $P^G_{\lambda^*}$, thus the $\hat\lambda$ asymptotic normality conditions can still be satisfied using the estimation approaches described earlier in Sections \ref{sec:stable} and \ref{sec:adaptiveBehavior}. \textit{The key question is, how close is $\theta^\Gn_T$ to our true estimand $\theta^*_{T} \triangleq \E_{P, \pi_1} \big[ \frac{1}{T} \sum_{t=1}^T R_t \big]$?}

Even though the two environments $P$ and $P^{\Gn}_{\lambda^*}$ have the same mean and sub-Gaussian parameters, the variance and higher moments of their reward distributions may differ. Note though that $\big| \theta_T^* - \theta^\Gn_T \big|$ must be small when $\pi_1$ has similar average rewards in both environments $P$ and $P^{\Gn}$, i.e., when the potential differences in the higher moments of the reward distributions in $P$ and $P^{\Gn}$ do not affect the value of $\pi_1$ much. A common case in which an algorithm $\pi_1$'s long-term value is not affected much by the higher moments of the reward distribution is when $\pi_1$ is a sub-linear regret algorithm. In this case, for sufficiently large $T$, the average reward under $\pi_1$ will be close to that under the optimal policy. We formalize this result in Proposition \ref{prop:regretClose} below (proof in Appendix \ref{app:closeRegret}).

We define the regret of a policy $\pi_1$ in any environment $P$ as follows:
\begin{align*}
    \MC{R}_T(P, \pi_1) = \E_{P, \pi_1} \bigg[ \sum_{t=1}^T \left( R_t(a^*) - R_t \right) \bigg], \qquad \TN{where}~a^* = \TN{argmax}_{a \in \MC{A}} \, \mu_a^*.
\end{align*}
Let $\MC{P}_{\lambda^*}$ denote a class of sub-Gaussian environments with arm means $(\mu_a^*)_{a \in \MC{A}}$ and sub-Gaussian variance proxies $( \sigma_a^* )_{a \in \MC{A}}$; let $\sigma_a^* < c$ for some constant $c$ for all $a \in \MC{A}$. Note that $P, P^G \in \MC{P}_{\lambda^*}$. In Proposition \ref{prop:regretClose} below, we show that the gap $\big| \theta^{\Gn}_T - \theta_T^* \big|$ can be upper bounded as a function of the cumulative regret of $\pi_1$ divided by $T$. The second part of the proposition shows that a confidence interval for $\theta_T^G$ can be modified to form a confidence interval for $\theta_T^*$.
\begin{prop}[Value Difference Bounded by Regret]
    \label{prop:regretClose}
    For any algorithm $\pi_1$,
    \begin{align*}
        \big| \theta^{\Gn}_T - \theta_T^* \big| \leq \frac{2}{T} \sup_{P' \in \MC{P}_{\lambda^*}} \MC{R}_T(P', \pi_1).
    \end{align*}
    Furthermore, suppose we can construct an asymptotically valid confidence interval for $\theta_T^{\Gn}$ for any $\alpha \in (0, 1)$, i.e., $\lim_{\Toff \to \infty} \inf_{\lambda^* \in \Lambda} \PP_{P_{\lambda^*}, \pi_0} \left( \theta_T^{\Gn} \in \left[ \hat\theta_T \pm \nu_{\Toff}(\alpha) \right] \right) \geq 1-\alpha$, then we can construct the following asymptotically valid confidence interval for $\theta^*_T$:
    \begin{align}
        \lim_{\Toff \to \infty} \, \inf_{\lambda^* \in \Lambda} \PP_{P_{\lambda^*}, \pi_0} \bigg( \theta_T^* \in \bigg[ \hat\theta_T \pm \bigg( \nu_{\Toff}(\alpha) + \frac{2}{T} \sup_{P' \in \MC{P}_{\lambda^*}} \MC{R}_T(P', \pi_1) \bigg) \bigg] \bigg) \geq 1-\alpha.
        \label{eqn:subG-CI}
    \end{align}
\end{prop}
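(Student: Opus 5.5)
The proof rests on one observation: each environment's value is pinned, up to its own regret, to the common quantity $\mu^*_{a^*} \triangleq \max_{a} \mu_a^*$. Both $P$ and $P^{\Gn}$ lie in $\MC{P}_{\lambda^*}$, so they share the arm means and hence the same optimal arm $a^*$ and the same optimal mean reward. For any $P' \in \MC{P}_{\lambda^*}$ we will write
\begin{align*}
    \E_{P', \pi_1}\bigg[\frac{1}{T}\sum_{t=1}^T R_t\bigg] = \mu^*_{a^*} - \frac{1}{T}\MC{R}_T(P', \pi_1).
\end{align*}
This uses $\E_{P',\pi_1}[R_t(a^*)] = \mu^*_{a^*}$, which holds because the potential outcomes $R_t(a^*)$ are drawn i.i.d. from $P'_{a^*}$ and linearity of expectation needs no independence from the actions. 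Applying the identity with $P' = P$ and $P' = P^{\Gn}$ and subtracting, the $\mu^*_{a^*}$ terms cancel:
\begin{align*}
    \big|\theta^{\Gn}_T - \theta^*_T\big| = \frac{1}{T}\big|\MC{R}_T(P,\pi_1) - \MC{R}_T(P^{\Gn},\pi_1)\big|.
\end{align*}

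To bound the right-hand side, we use the triangle inequality, giving at most $\frac{1}{T}\big(|\MC{R}_T(P,\pi_1)| + |\MC{R}_T(P^{\Gn},\pi_1)|\big)$. We then bound each term by the supremum over $\MC{P}_{\lambda^*}$, which yields the factor $2$. One subtlety needs checking here: regret could be negative in principle, so the supremum should dominate the absolute value. But $\E_{P', \pi_1}[R_t] = \E_{P', \pi_1}[\mu^*_{A_t}] \le \mu^*_{a^*}$, using the tower property and the fact that $R_t(a)$ is independent of $\HH_{t-1}$. Hence every regret is nonnegative and the bound holds. Finiteness of means is guaranteed by sub-Gaussianity, so all expectations are well defined.

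For the confidence interval, we use the inclusion of events
\begin{align*}
    \big\{\theta^{\Gn}_T \in [\hat\theta_T \pm \nu_{\Toff}(\alpha)]\big\} \subseteq \big\{\theta^*_T \in [\hat\theta_T \pm (\nu_{\Toff}(\alpha) + b_T)]\big\}, \qquad b_T \triangleq \frac{2}{T}\sup_{P'\in\MC{P}_{\lambda^*}}\MC{R}_T(P',\pi_1).
\end{align*}
The inclusion follows from the first part of the proposition and the triangle inequality: $|\theta^*_T - \hat\theta_T| \le |\theta^{\Gn}_T - \hat\theta_T| + b_T$. Since $b_T$ is deterministic given $\lambda^*$, the probability of the larger event is at least that of the smaller one, for every $\lambda^*$ and every $\Toff$. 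Taking the infimum over $\lambda^*$ and then the $\liminf$ in $\Toff$ preserves the inequality and gives \eqref{eqn:subG-CI}.

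The main obstacle is simply being careful that the regret identity is valid under the potential-outcome model with an adaptive $\pi_1$. This amounts to verifying $\E[R_t(a)\mid \HH_{t-1}, A_t = a] = \mu_a^*$, which holds because $R_t(\cdot)$ is drawn independently of the past and of the action-selection randomness. Everything else is elementary.
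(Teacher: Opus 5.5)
Your proposal is correct and follows essentially the same route as the paper. The paper also adds and subtracts the common optimal-arm value $\E[\frac{1}{T}\sum_t R_t(a^*)]$ (equal in $P$ and $P^{\Gn}$ because the means coincide), applies the triangle inequality, and bounds each piece by the worst-case regret; it then obtains the interval from the event inclusion $|\theta_T^* - \hat\theta_T| \le |\theta_T^{\Gn} - \hat\theta_T| + b_T$. Your explicit check that regret is nonnegative supplies a step the paper leaves implicit (it is what allows each absolute difference to be bounded by $\MC{R}_T$), and it also shows the constant $2$ could be improved to $1$, since $|x-y| \le \max\{x,y\}$ for $x,y \ge 0$.
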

Note that for any bandit policy $\pi_1$ with sub-linear worst case regret under sub-Gaussian rewards, $\lim_{T \to \infty} \frac{1}{T} \cdot \sup_{P' \in \MC{P}_{\lambda^*}} \MC{R}_T(P', \pi_1) = 0$. Under the sub-Gaussian reward assumption, many common bandit algorithms like Thompson Sampling and $\epsilon$-greedy with decaying $\epsilon$ satisfy the sublinear regret condition for the worst case $P' \in \MC{P}_{\lambda^*}$~\citep{lattimore2020bandit}.

The second part of Proposition \ref{prop:regretClose} assumes that that we can construct an asymptotically valid confidence interval for $\theta_T^G$. Note that this condition can be satisfied by Theorem \ref{thm:normality} for static (or stable behavior policies) and Theorem \ref{thm:asymptoticCIAdaptive} for adaptive behavior policies (as discussed earlier in this section). 
In our experiments, we construct confidence intervals using \eqref{eqn:subG-CI} without the $\frac{2}{T} \sup_{P' \in \MC{P}_{\lambda^*}} \MC{R}_T(P', \pi_1)$ term. This is motivated by the fact that under sub-linear worst case regret algorithms $\pi_1$, we can make $\frac{1}{T} \cdot \sup_{P' \in \MC{P}_{\lambda^*}} \MC{R}_T(P', \pi_1)$ arbitrarily small for sufficiently large $T$.\footnote{Note that while we want to choose a sufficiently large $T$, we do not take the limit as $T \to \infty$ because our Proposition \ref{prop:differentiablef} (which provides sufficient conditions for the function $f_T(\lambda, \pi_1)$ to be differentiable with respect to $\lambda$) requires that $T$ is finite when $\pi_1$ is arbitrarily adaptive. We believe future work could explore whether $f_T(\lambda, \pi_1)$ is differentiable without requiring $T$ to be finite, under certain conditions on $\pi_1$.}

\bo{Unknown Variance Proxies $\big( (\sigma_a^*)^2 \big)_{a \in \MC{A}}$.}
In practice, we generally do not know the reward variance proxy parameters $\big( (\sigma_a^*)^2 \big)_{a \in \MC{A}}$. Furthermore, estimating the sub-Gaussian variance proxy is challenging, and the standard estimators for the sub-Gaussian variance proxy do not have asymptotic normality guarantees \citep{arbel2020strict}. In our experiments, we use two heuristic approaches: (i) Use an upper bound on the sub-Gaussian variance proxy, e.g., derived using the Hoeffding Lemma for bounded random variables, or (ii) Simply estimate the sub-Gaussian variance proxy $(\sigma_a^*)^2$ with the empirical variance. While (i) can be a very loose upper bound in practice, the true variance of the reward is a lower bound on the sub-Gaussian variance parameter, so (ii) will generally underestimate the true sub-Gaussian parameter. We empirically assess these two approaches in our experiments shown in Figure \ref{fig:BetaEqarmMain}.

\section{Simulation Experiments} \label{sec:simulation}
We evaluate our BSI approach in a variety of synthetic environments, as well as a semi-synthetic environment built using a randomized control trial experiment dataset \citep{gibson2022promised}. We now describe the baseline inference approaches that we compare to in all our experiments.

\bo{Baseline Inference Methods.}
Besides the \textsc{Naive \(t\)-test} that we used earlier in Figure \ref{fig:motivatingIssue} (which treats observed rewards as i.i.d. and is valid for static policies when $\pi_0=\pi_1$), we also compare BSI to four standard off-policy evaluation baselines. 
\textsc{Inverse Propensity Weighting (IPW)}~\citep{horvitz1952generalization,hirano2003efficient} and \textsc{Doubly Robust (DR)} estimator~\citep{robins1994estimation,chernozhukov2018double,jiang2016doubly} are standard importance weight-based OPE approaches that provide valid confidence intervals for data collected with static behavior policies. The final two approaches have guarantees for adaptive behavior policies and static evaluation policies. \textsc{Contextual Adaptive Doubly Robust (CADR)} baseline~\citep{bibaut2021post} is an adaptively weighted inference approach based on asymptotic approximations. The \textsc{Empirical Likelihood for Contextual Bandits (ELFCB)}~\citep{karampatziakis2020empirical} is an approach that forms confidence bounds using the empirical likelihood (rather than an asymptotic Gaussian approximation). 

Note that in many of the experiments, the behavior and evaluation policies are the same ($\pi_0 = \pi_1$), e.g., both are Thompson sampling (TS) or $\epsilon$-greedy. In this case, methods that utilize importance weights will simply have importance weights equal to $1$. Additionally, all the baseline methods we consider are primarily designed for settings in which the evaluation horizon $T$ is less than or equal to the horizon of the offline data $\Toff$. When $\Toff > T$, the baseline methods only utilize the first $T$ data points in the offline dataset $D^{(\pi_0)}$. Additional baseline details are in Appendix~\ref{app:baselines}.

\bo{Synthetic Experiments: Known Parametric Reward Distributions (Figure \ref{fig:Simulation}).}
We consider two settings each with three arms. \textit{(i) Gaussian with unknown reward variance} with arm means $(0.1, 0.2, 0.2)$ and noise variances $(1, 1, 1)$. \textit{(ii) Bernoulli rewards} with arm means $(0.35, 0.5, 0.6)$. We consider both adaptive and static behavior policies ($\pi_0=\epsilon$-Greedy, Thompson Sampling, uniform policy), and adaptive evaluation policies ($\pi_1=\epsilon$-Greedy, Thompson Sampling). For settings with adaptive $\pi_0$, we show results for both \textsc{BSI (Projection)}, which uses the confidence interval from \eqref{eqn:CIadapt}, and standard \textsc{BSI}, which replaces $\sqrt{\chi^2_{\TN{dim}(\lambda),1-\alpha}}$ in \eqref{eqn:CIadapt} with $z_{1-\alpha/2}$. Though \textsc{BSI} only has validity guarantees under non-adaptive $\pi_0$, similar to \citet{zhang2021statistical}, we find empirically that often the non-projection interval has good empirical performance in practice.
Overall, \textsc{BSI} and \textsc{BSI (Projection)} consistently achieve reliable coverage in practice, while the baseline methods often have significant undercoverage. The BSI confidence intervals are generally wider than those of the baselines, but similar in the static behavior policy case. See BSI implementation details in Appendix \ref{app:BSIdetails}.

\bo{Synthetic Experiments: Sub-Gaussian Reward Distributions (Figure \ref{fig:BetaEqarmMain}).}
We consider an environment with an unknown, sub-Gaussian reward distribution. We use a three-arm setting where the true rewards are Beta-distributed rewards with parameters $(0.35,0.65), (0.5,0.5), (0.5,0.5)$. Since the sub-Gaussian variance proxy is unknown, we consider two approaches (as described at the end of Section \ref{sec:subgaussian}). \textsc{BSI-Hoeffding} sets $(\sigma_a^*)^2 = 0.25$, the maximum variance proxy for variables on $[0,1]$ by the Hoeffding Lemma. \textsc{BSI-Empirical} estimates $(\sigma_a^*)^2$ using the empirical variance, which can underestimate the true sub-Gaussian variance proxy. Similar to the parametric reward case, we also show standard and projection versions of the BSI confidence intervals. Overall, we find that all BSI-based confidence intervals have valid coverage in practice, while many of the baselines have significant undercoverage for at least one value of $\Toff$. The \textsc{BSI-Hoeffding} intervals performed especially well in our experiments. 

\begin{figure}[b!]
    \centering
    \includegraphics[width=\textwidth,trim={1cm 0.5cm 1cm 0.5cm}]{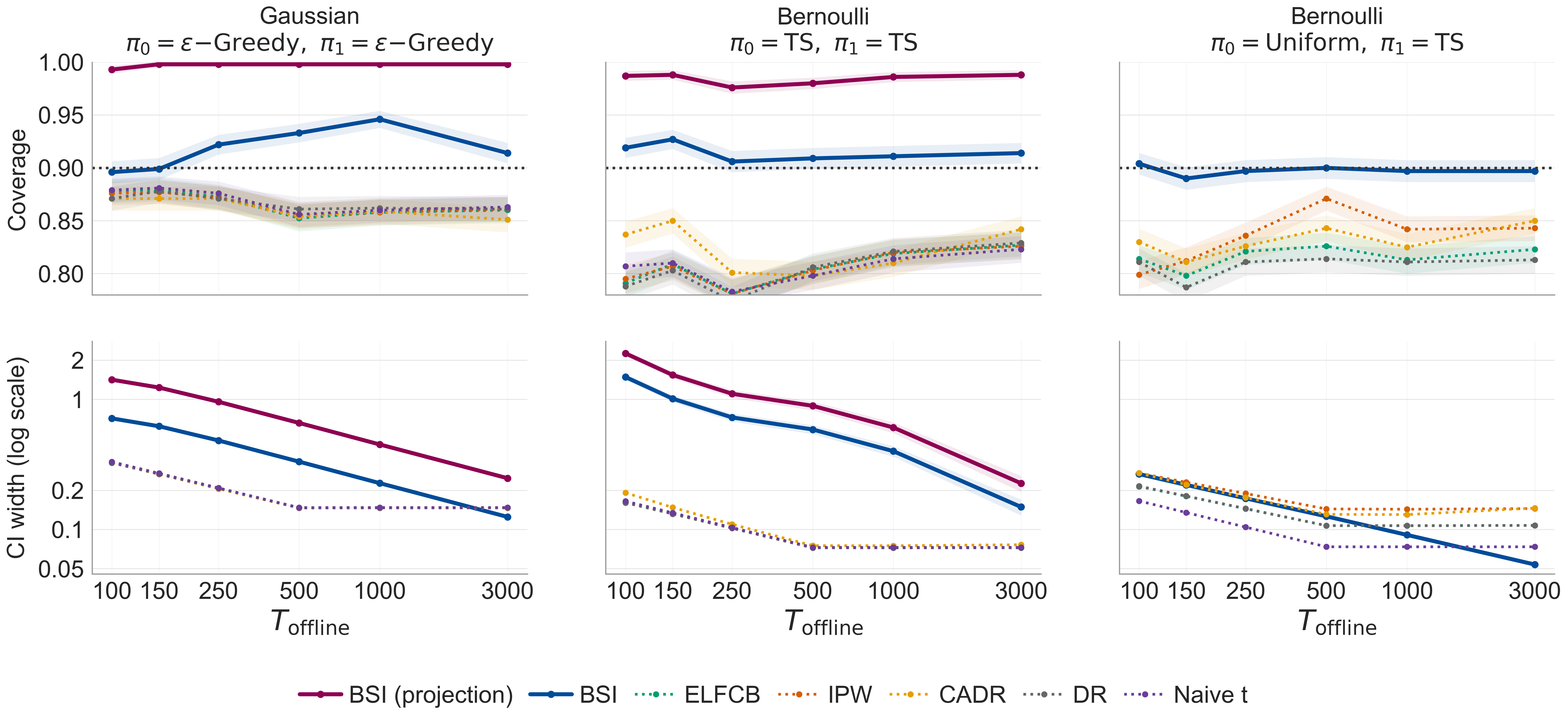}
        
    \caption{\bo{Known Parametric Reward  Experiments.} Average coverage (top row) and width (bottom row) of $90\%$ confidence intervals. All experiments use $T=500$ and are averaged across 1000 replications (shaded regions denote $1$ standard error). BSI internally uses $m=20k$ MC repetitions. Columns: \textbf{(Left)} Gaussian rewards with $3$ arms and unknown variance using $\pi_0=\epsilon\text{-Greedy}$ and $\pi_1=\epsilon\text{-Greedy}$; \textbf{(Middle)} Bernoulli rewards with $3$ arms using $\pi_0 = \pi_1 =\mathrm{Thompson~Sampling}$ (Beta-Bernoulli); \textbf{(Right)} Bernoulli rewards with $3$ arms using $\pi_0=\mathrm{Uniform}$ and $\pi_1=\mathrm{Thompson~Sampling}$ (Beta-Bernoulli).
    Note, the baseline methods only use data up to $T$, as they are designed for settings in which $T \leq \Toff$.
}
   \label{fig:Simulation}
   \vspace{-3mm}
\end{figure}

\begin{figure}[h!]
    \centering
    \includegraphics[width=0.85\textwidth,trim={1cm 0.5cm 1cm 0.5cm}]{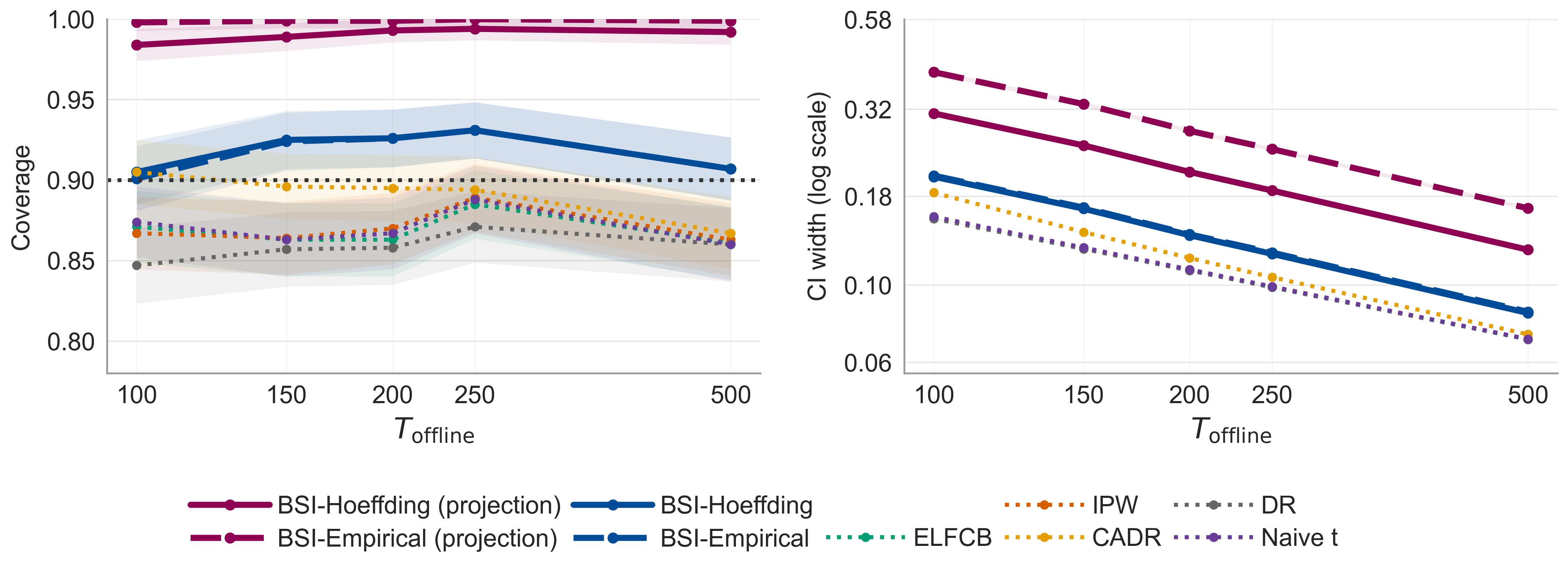}
    \caption{\bo{Sub-Gaussian Reward Distribution Experiments.} Average coverage (top row) and width (bottom row) of $90\%$ confidence intervals. All experiments use $T=500$ and are averaged across 1000 replications (shaded regions denote $1$ standard error). Three-arm setting with Beta-distributed rewards with hyper-parameters $(0.35,0.65), (0.5,0.5)$ and $ (0.5,0.5)$. Both the $\pi_0$ and $\pi_1$ are Thompson Sampling (Gaussian-Gaussian). BSI internally uses $m=20k$ MC repetitions.}
    \label{fig:BetaEqarmMain}
\end{figure}

\bo{Semi-Synthetic Data Simulation (Figure \ref{fig:semi_synthetic}).}
We build a semi-synthetic simulation environment based on a randomized controlled trial to study the effect of different financial incentives for mobile phone survey participation in Uganda \citep{gibson2022promised}; see Appendix \ref{app:semiSyntheticEnv}. The primary outcome is whether someone successfully completes the phone survey (of those who picked up the call). Phone call-based intervention experiments are increasingly using bandit algorithms to optimize intervention decisions \citep{lalan2024improving,mate2023improved,killian2023robust,dasgupta2025learning,kasy2021adaptive}. The original study by \citet{gibson2022promised} used a uniform assignment policy over three interventions. %
We use \textsc{BSI} to construct a confidence interval for the expected reward in this environment under a Thompson Sampling (Beta-Bernoulli) algorithm. We use $\Toff=T=250$ and use a uniform behavior policy to collect $D^{(\pi_0)}$. As seen in Figure \ref{fig:semi_synthetic}, we find that \textsc{BSI} has accurate coverage and confidence intervals that are narrower than some baselines, which all have significant undercoverage.

\begin{figure}[h]
    \centering
    \includegraphics[width=\textwidth,trim={1cm 0.5cm 1cm 0.5cm}]{figures/semi_synthetic_combined_T250.png}  
    \caption{\bo{Semi-Synthetic Experiments.} The semi-synthetic simulator based on a phone survey incentive randomized control trial  \citep{gibson2022promised} has $3$ arms and binary rewards. We use a uniform behavior policy and TS evaluation policy, and use $\Toff=T=250$. We average results across $1000$ replications (error bars denote $1$ standard error). BSI internally uses $m=20k$ MC repetitions.}
    \label{fig:semi_synthetic}
    \vspace{-3mm}
\end{figure}

 \section{Discussion}

We believe that compute-heavy, simulation-based methods for providing frequentist statistical inference guarantees are relatively underexplored in the context of adaptively collected data. Intuitively, evaluating an algorithm by repeatedly simulating it is very natural. Previous works have proposed using such simulation-based approaches to assess bandit algorithms \citep{ghosh2024did,cho2025simulation}, but we believe we are the first work that leverages adaptively collected data to build a simulator for statistical inference that has both rigorous theoretical guarantees and is widely applicable to a large range of adaptive algorithms.

In this work, we present an approach for constructing valid confidence intervals for the value of adaptive policies. Since our approach involves using data to fit a simulator, we can utilize on-policy or off-policy data and provide inference guarantees for horizons not observed in the original dataset. Furthermore, we show that our approach can extend beyond parametric reward setting to general sub-Gaussian rewards when the evaluation policy is regret-minimizing. Although this work only studies the multi-arm bandit setting, we believe this approach could potentially be extended to more complex settings like contextual bandits and Markov decision processes.

\bibliographystyle{plainnat}
\bibliography{main}

\clearpage
\appendix

\addtocontents{toc}{\protect\setcounter{tocdepth}{2}}
\tableofcontents

\clearpage

  \newpage
  \section{Theory Results} \label{app:proof}

\subsection{Proof of Theorem \ref{thm:normality}: Bandit Simulation for Inference with Static $\pi_0$}
\label{app:proofNormality}

\begin{lemma}[Uniform Linearization]
    \label{lemma:linearization}
    Suppose $\sqrt{\Toff} \big( \hat{\lambda} - \lambda^* \big) \Dto \N \big(0, \Sigma_{\lambda^*,\pi_0} \big)$ uniformly in distribution over $P_{\lambda^*,\pi_0} \in \{ P_{\lambda,\pi_0} : \lambda \in \Lambda \}$. Also, let $\sup_{\lambda^* \in \Lambda} \| \Sigma_{\lambda^*} \|_2 < \infty$. Let $f$ be a uniformly differentiable function with respect to $\lambda \in \Lambda$, satisfying \eqref{eqn:uniformlyDiff}. Then, 
    \begin{align}
        \sqrt{\Toff} \big( f_T( \hat{\lambda}, \pi_1) - f_T( \lambda^*, \pi_1) \big) - \nabla f_T(\lambda^*, \pi_1) \sqrt{\Toff} \big( \hat{\lambda} - \lambda^* \big) \Pto 0
        \quad \TN{as} \quad \Toff \to \infty,
        \label{eqn:uniformLinearization}
    \end{align}
    uniformly over $P_{\lambda^*,\pi_0} \in \{ P_{\lambda,\pi_0} : \lambda \in \Lambda \}$.
\end{lemma}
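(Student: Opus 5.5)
The plan is to show that the remainder term
\[
\Delta_{\Toff} \triangleq \sqrt{\Toff}\big( f_T(\hat\lambda,\pi_1) - f_T(\lambda^*,\pi_1) - \nabla f_T(\lambda^*,\pi_1)^\top(\hat\lambda-\lambda^*)\big)
\]
satisfies $\sup_{\lambda^*\in\Lambda}\PP_{P_{\lambda^*},\pi_0}(|\Delta_{\Toff}|>\epsilon)\to 0$ for every $\epsilon>0$. The idea is to combine two ingredients. Uniform differentiability \eqref{eqn:uniformlyDiff} controls the remainder relative to $\|\hat\lambda-\lambda^*\|_2$. Uniform tightness of $\sqrt{\Toff}(\hat\lambda-\lambda^*)$ then shows that $\|\hat\lambda-\lambda^*\|_2$ is uniformly $O_P(\Toff^{-1/2})$.

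\textbf{Step 1: uniform tightness.} We need, for every $\eta>0$, some $M$ and $\Toff_0$ such that $\sup_{\lambda^*}\PP(\sqrt{\Toff}\|\hat\lambda-\lambda^*\|_2>M)\le\eta$ for all $\Toff\ge\Toff_0$. This follows from uniform convergence in distribution together with the uniform covariance bound. Take $M$ to be a continuity point of the norm of the limit. Then
\[
\PP\big(\sqrt{\Toff}\|\hat\lambda-\lambda^*\|_2>M\big) \le \PP\big(\|\Sigma_{\lambda^*,\pi_0}^{1/2}Z\|_2>M\big) + o(1)
\]
uniformly over $\lambda^*$. By Markov's inequality, the first term is at most $\sup_{\lambda^*}\operatorname{tr}(\Sigma_{\lambda^*,\pi_0})/M^2 \le \dim(\lambda)\sup_{\lambda^*}\|\Sigma_{\lambda^*,\pi_0}\|_2/M^2$, which is finite by assumption. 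The only subtlety is in making the uniform weak convergence statement precise: either as a $\sup_{\lambda^*}$ over bounded Lipschitz test functions, or over the CDF of $\|\cdot\|$ at continuity points. I would invoke whichever form is cleanest, for example by bounding the indicator by a Lipschitz function of $\|x\|$ that transitions between $M/2$ and $M$.

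\textbf{Step 2: remainder bound.} Write $r(\delta)$ for the supremum in \eqref{eqn:uniformlyDiff}, so that $r(\delta)\to 0$ as $\delta\to 0$. Set $h=\hat\lambda-\lambda^*$. Since $\hat\lambda$ takes values in $\Lambda$, so that $\lambda^*+h\in\Lambda$, on the event $\{\|h\|_2\le\delta\}$ we have $|\Delta_{\Toff}|\le r(\delta)\sqrt{\Toff}\|h\|_2$. If $\hat\lambda$ might fall outside $\Lambda$, one could redefine it on a vanishing-probability event; I would note this as a mild technical point.

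\textbf{Step 3: combine.} Fix $\epsilon,\eta>0$ and choose $M$ as in Step 1. Set $\delta_{\Toff}=M/\sqrt{\Toff}\to 0$. On the event $\{\sqrt{\Toff}\|h\|_2\le M\}$ we get $|\Delta_{\Toff}|\le M\, r(\delta_{\Toff})$, and this is smaller than $\epsilon$ for large $\Toff$. Crucially, $r(\delta_{\Toff})$ does not depend on $\lambda^*$ because the supremum in \eqref{eqn:uniformlyDiff} is taken over all of $\Lambda$. Hence $\sup_{\lambda^*}\PP(|\Delta_{\Toff}|>\epsilon)\le\eta$ eventually, and since $\eta$ was arbitrary this gives \eqref{eqn:uniformLinearization}.

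The main obstacle is Step 1, namely converting uniform weak convergence into uniform tightness. Both the uniform covariance bound and the careful definition of uniform convergence are needed there; Steps 2 and 3 are routine once uniform tightness is in hand.
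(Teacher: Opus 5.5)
Your proposal is correct and follows essentially the same route as the paper. Both arguments bound the remainder by $\sqrt{\Toff}\|\hat\lambda-\lambda^*\|_2$ times the $\lambda^*$-free uniform differentiability remainder, then split on the tightness event $\{\sqrt{\Toff}\|\hat\lambda-\lambda^*\|_2\le M\}$. Your $\eta$--$M$ form of uniform tightness, justified via the covariance bound and a Lipschitz smoothing of the indicator, is the precise version of the paper's slightly loose claim that a single fixed constant $c$ makes $\sup_{\lambda^*}\PP(\sqrt{\Toff}\|\hat\lambda-\lambda^*\|_2>c)\to 0$.
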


\begin{proof}
It is useful to define the following normalized remainder term:
\begin{align*}
    \nu(\lambda^*, h) \triangleq \frac{ \| f_T( \lambda^* + h, \pi_1) - f_T( \lambda^*, \pi_1) - \nabla h^\top f_T( \lambda^*, \pi_1) \|_2 }{ \| h \|_2}.
\end{align*}
Note, that we can bound the  norm of \eqref{eqn:uniformLinearization} as follows:
\begin{align*}
    \sqrt{\Toff} \cdot \big\| f_T( \hat{\lambda}, \pi_1 ) - f_T( \lambda^*, \pi_1 ) - \nabla f_T(\lambda^*, \pi_1) \big( \hat{\lambda} - \lambda^* \big) \big\|_2 
    \leq \sqrt{\Toff} \cdot \big\| \hat{\lambda} - \lambda^* \big\|_2 \cdot \nu \big( \lambda^*, \hat{\lambda} - \lambda^* \big)
\end{align*}
Thus, it is sufficient to show the following converges to zero for any $\epsilon > 0$:
\begin{align}
    \sup_{\lambda^* \in \Lambda} \PP_{\lambda^*,\pi_0} \left( \sqrt{\Toff} \cdot \| \hat{\lambda} - \lambda^* \|_2 \cdot \nu \big( \lambda^*, \hat{\lambda} - \lambda^* \big) > \epsilon \right)
    \label{eqn:linearSup}
\end{align}
Note that since $\sqrt{\Toff} ( \hat{\lambda} - \lambda^* )$ is uniformly tight by assumption, there exists some constant $c < \infty$ such that $\sup_{\lambda^* \in \Lambda} \PP_{\lambda^*}  \big( \sqrt{\Toff} \cdot \big\| \hat{\lambda} - \lambda^* \big\|_2 > c \big) \to 0$. Thus, we can upper bound \eqref{eqn:linearSup} by 
\begin{align*}
    \leq \sup_{\lambda^* \in \Lambda} \PP_{\lambda^*,\pi_0} \big( \sqrt{\Toff} \cdot \big\| \hat{\lambda} - \lambda^* \big\|_2 > c \big) +  \sup_{\lambda^* \in \Lambda} \PP_{\lambda^*,\pi_0} \big( c \cdot \nu \big( \lambda^*, \hat{\lambda} - \lambda^* \big) > \epsilon \big)  \to 0.
\end{align*}
Above, the second term goes to zero by our uniform differentiability assumption on $f$, and since $\hat{\lambda} - \lambda^* \Pto 0$ %
uniformly over $P_{\lambda^*,\pi_0} \in \{ P_{\lambda,\pi_0} : \lambda \in \Lambda \}$.
\end{proof}

\begin{customthm}{\ref{thm:normality}}[Bandit Simulation for Inference: Static $\pi_0$]
    Under Assumptions \ref{assump:lambdaNormality}-\ref{assump:differentiable}, $\hat{\theta}_T$ satisfies
    \begin{align*}
        \sqrt{ \Toff } (\hat{\theta}_T - \theta_T^*) \Dto \N \Big( 0, \, \nabla_\lambda f_T(\lambda^*, \pi_1)^\top \Sigma_{\lambda^*,\pi_0} \nabla_\lambda f_T(\lambda^*, \pi_1) \Big) \quad \TN{as} \quad \Toff \to \infty,
    \end{align*}
    uniformly over the underlying  $P_{\lambda^*, \pi_0} \in \{ P_{\lambda, \pi_0} : \lambda \in \Lambda \}$.
    Thus, assuming a non-degenerate limiting variance, i.e., $\inf_{\lambda^* \in \Lambda} \nabla_\lambda f_T(\lambda^*, \pi_1)^\top \Sigma_{\lambda^*, \pi_0} \nabla_\lambda f_T(\lambda^*, \pi_1) > 0$, then for any %
    $\alpha \in (0, 1)$, %
    \begin{align*}
        \lim_{\Toff \to \infty} \inf_{\lambda^* \in \Lambda} \PP_{P_{\lambda^*}, \pi_0} \bigg( \theta_T^* \in \bigg[ \hat{\theta}_T \pm \frac{ z_{1-\alpha/2} }{ \sqrt{ \Toff} } \sqrt{ \nabla_\lambda f_T(\hat{\lambda}, \pi_1)^\top \hat\Sigma_{\hat{\lambda},\pi_0} \nabla_\lambda f_T(\hat{\lambda}, \pi_1)  } 
        \bigg] \bigg) \geq 1-\alpha,
    \end{align*}
    where $\hat\Sigma_{\hat{\lambda},\pi_0}$ is  any  positive definite matrix satisfying $\hat\Sigma_{\hat{\lambda},\pi_0} \Pto \Sigma_{\lambda^*, \pi_0}$ uniformly over $\lambda^\ast \in \Lambda$.   
\end{customthm}

\begin{proof}
By Assumption \ref{assump:lambdaNormality}, $\sqrt{ \Toff } ( \hat{\lambda} - \lambda^* ) \Dto N \big(0, \Sigma_{\lambda^*,\pi_0} \big)$ uniformly over $P_{\lambda^*} \in \{ P_{\lambda} : \lambda \in \Lambda \}$. By Slutsky's Theorem,
\begin{align}
    \nabla f_T(\lambda^*, \pi_1) \sqrt{ \Toff } \big( \hat{\lambda} - \lambda^* \big) \Dto \N \big(0, \nabla f_T(\lambda^*, \pi_1)^\top \Sigma_{\lambda^*,\pi_0} \nabla f_T(\lambda^*, \pi_1) \big)
    \label{eqn:intermed0}
\end{align}
uniformly over $P_{\lambda^*, \pi_0} \in \{ P_{\lambda, \pi_0} : \lambda \in \Lambda \}$. By Lemma \ref{lemma:linearization},
\begin{align}
    \sqrt{ \Toff } \big( f_T( \hat{\lambda}, \pi_1 ) - f_T( \lambda^*, \pi_1) \big) - \nabla f_T(\lambda^*, \pi_1) \sqrt{ \Toff } \big( \hat{\lambda} - \lambda^* \big) \Pto 0,
    \label{eqn:intermed1}
\end{align}
uniformly over $P_{\lambda^*, \pi_0} \in \{ P_{\lambda, \pi_0} : \lambda \in \Lambda \}$. The asymptotic normality result in the Theorem holds by \eqref{eqn:intermed0}, \eqref{eqn:intermed1}, and Slutsky's theorem. 

Thus, we have that
\begin{align*}
    \left( \nabla_\lambda f_T(\lambda^*, \pi_1)^\top \Sigma_{\lambda^*,\pi_0}  \nabla_\lambda f_T(\lambda^*, \pi_1) \right)^{-1/2}
    \sqrt{ \Toff } (\hat{\theta}_T - \theta_T^*) \Dto \N \left( 0, 1 \right),
\end{align*}
uniformly over $P_{\lambda^*, \pi_0} \in \{ P_{\lambda, \pi_0} : \lambda \in \Lambda \}$.
For the confidence interval result, by Slutsky's Theorem it is sufficient to show that
\begin{align*}
    \frac{
    \nabla_\lambda f_T(\hat\lambda, \pi_1)^\top \hat\Sigma_{\hat\lambda,\pi_0}  \nabla_\lambda f_T(\hat\lambda, \pi_1) }{
    \nabla_\lambda f_T(\lambda^*, \pi_1)^\top \Sigma_{\lambda^*,\pi_0} \nabla_\lambda f_T(\lambda^*, \pi_1) 
    }
    \Pto 1,
\end{align*}
uniformly over $P_{\lambda^*, \pi_0} \in \{ P_{\lambda, \pi_0} : \lambda \in \Lambda \}$. Since the denominator $\nabla_\lambda f_T(\lambda^*, \pi_1)^\top \Sigma_{\lambda^*,\pi_0}  \nabla_\lambda f_T(\lambda^*, \pi_1)$ is uniformly bounded above zero by our non-degeneracy assumption, \\ $\inf_{\lambda^* \in \Lambda} \nabla_\lambda f_T(\lambda^*, \pi_1)^\top \Sigma_{\lambda^*, \pi_0} \nabla_\lambda f_T(\lambda^*, \pi_1) > 0$, it sufficient to show that 
\begin{align}
    \label{eqn:consistentVar}
    \nabla_\lambda f_T(\hat\lambda, \pi_1)^\top \hat\Sigma_{\hat\lambda,\pi_0}  \nabla_\lambda f_T(\hat\lambda, \pi_1) \Pto \nabla_\lambda f_T(\lambda^*, \pi_1)^\top \Sigma_{\lambda^*,\pi_0} \nabla_\lambda f_T(\lambda^*, \pi_1),
\end{align}
uniformly over $P_{\lambda^*, \pi_0} \in \{ P_{\lambda, \pi_0} : \lambda \in \Lambda \}$. \eqref{eqn:consistentVar} above holds by Slutsky's Theorem since $\hat\Sigma_{\hat\lambda,\pi_0}$ is uniformly consistent, and since $\nabla_\lambda f_T(\hat\lambda, \pi_1) \Pto \nabla_\lambda f_T(\lambda^*, \pi_1)$ uniformly (which holds by continuous mapping theorem since $\hat\lambda \Pto \lambda^*$ uniformly and $\nabla f_T(\lambda, \pi_1)$ is continuous by Assumption \ref{assump:differentiable}).
\end{proof}

\subsection{Proof of Proposition \ref{prop:differentiablef}: Derivative of $f_T$}
\label{app:differentiability}

\begin{customprop}{\ref{prop:differentiablef}}[Derivative of $f_T$]
    $f_T(\lambda, \pi_1)$ satisfies Assumption \ref{assump:differentiable} for an open set $\Lambda$ with a compact closure, if for any $\lambda = (\lambda_a)_{a \in \MC{A}} \in \Lambda$ the following conditions are satisfied:
    \begin{itemize}[leftmargin=*]
        \item \textnormal{\bo{Finite second moment.}} $\E_{\lambda_a} \big[ R_t(a)^2 \big] < \infty$ and $\E_{\lambda_a} \big[ \left\| \nabla_{\lambda_a} \log P_{\lambda_a} \left( R_t(a) \right) \right\|_1^2 \big] < \infty$
        \item \textnormal{\bo{Continuous.}} Let $P_{\lambda_a} \left( R_t(a) \right)$ and $\nabla_{\lambda_a} \log P_{\lambda_a} \left( R_t(a) \right)$ be almost surely continuous in $\lambda_a$.
    \end{itemize}
    Moreover, the derivative $\nabla_{\lambda} f_T(\lambda, \pi_1)$ can be written as follows:
    \begin{align}
        \label{eqn:fgrad}
        \nabla_{\lambda} f_T(\lambda, \pi_1) 
        = \frac{1}{T} \sum_{t=1}^{T} \E_{\lambda, \pi_1} \bigg[ \nabla_{\lambda} \log P_{\lambda_a} (R_{t} \mid A_{t}) \cdot \sum_{t'=t}^T  R_{t'} \bigg].
    \end{align}
\end{customprop}

\begin{remark}[Gradients for static $\pi_1$]
Note, in the special case that $\pi_1$ is a static policy, $\nabla_\lambda f_T(\lambda, \pi_1) = \frac{1}{T} \sum_{t=1}^{T} \E_{\lambda, \pi_1} \left[ \nabla_\lambda \log P_{\lambda_a} (R_{t} \mid A_{t}) \cdot R_{t'} \right]$. This is because for any $t \neq t'$, $R_t$ and $R_{t'}$ are independent, and we have  $\E_{\lambda, \pi_1} \left[ R_t \cdot \nabla_\lambda \log P_{\lambda_a} (R_{t'} \mid A_{t'}) \mid \HH_{t'-1} \right] 
= R_t \cdot \E_{\lambda, \pi_1} \left[ \nabla_\lambda \log P_{\lambda_a} (R_{t'} \mid A_{t'}) \mid \HH_{t'-1} \right] = 0$. Thus, by the law of iterated expectations,
\begin{align*}
    \frac{1}{T} \sum_{t=1}^{T} \E_{\lambda, \pi_1} \bigg[ \nabla_\lambda \log P_{\lambda_a} (R_{t} \mid A_{t}) \cdot \sum_{t'=t}^T  R_{t'} \bigg] 
    = \frac{1}{T} \sum_{t=1}^{T} \E_{\lambda, \pi_1} \big[ \nabla_\lambda \log P_{\lambda_a} (R_{t} \mid A_{t}) \cdot R_t \big]
\end{align*}
\end{remark}

\begin{proof}
We use $\PP_{\lambda, \pi_1}$ to denote the distribution of the trajectory induced by policy $\pi_1$ when applied on the environment parameterized by $\lambda$. Note that the likelihood of any sequence of actions and rewards $h_{T} \triangleq \{ (a_t, r_t) \}_{t=1}^T$ under $\PP_{\lambda, \pi_1}$ can be written as follows:
\begin{align}
    \PP_{\lambda, \pi_1}( A_{1:T} = a_{1:T}, R_{1:T} = r_{1:T} ) 
    = \prod_{t=1}^T \big\lbrace \pi_1(a_t \mid h_{t-1}) \cdot P_{\lambda_a} (r_t \mid a_t) \big\rbrace.
    \label{eqn:likelihood}
\end{align}
For the proof, it is useful to simplify the following derivative:
\begin{align}
    &\nabla_{\lambda} \PP_{\lambda, \pi_1}( A_{1:T} = a_{1:T}, R_{1:T} = r_{1:T} )
    = \prod_{t=1}^T \pi_1(a_t \mid h_{t-1}) \cdot \nabla_\lambda \bigg\{ \prod_{t=1}^T P_{\lambda_{a_t}} (r_t \mid a_t) \bigg\} \nonumber \\
    &= \bigg\{ \prod_{t=1}^T \pi_1(a_t \mid h_{t-1}) 
    \bigg\} \cdot \bigg\{ \prod_{t=1}^T P_{\lambda_{a_t}} (r_t \mid a_t ) \bigg\} \cdot \nabla_{\lambda} \log \bigg\{ \prod_{t=1}^T P_{\lambda_{a_t}} (r_t \mid a_t) \bigg\} \nonumber  \\
    &= \prod_{t=1}^T \big\{ \pi_1(a_t \mid h_{t-1}) \cdot P_{\lambda_{a_t}} (r_t \mid a_t ) \big\} \cdot \sum_{t'=1}^T \nabla_{\lambda} \log P_{\lambda_{a_{t'}}} (r_{t'} \mid a_{t'} ) 
    \label{eqn:rewriteDeriv}
\end{align}

\paragraph{Deriving the derivative.}
By the definition of $f_T$ and the law of total probability, $\nabla_{\lambda} f_T(\lambda, \pi_1)$
\begin{align*}
    = \nabla_{\lambda} \E_{\lambda,\pi_1} [ \bar{R}_T ] 
    = \nabla_{\lambda} \bigg\{ \sum_{a_{1:T} \in \mathcal{A}^T} \int_{r_{1:T} \in \real^T} \bar{r}_T \cdot \PP_{\lambda, \pi_1}( A_{1:T} = a_{1:T}, R_{1:T} = r_{1:T} ) d r_{1:T} \bigg\},
\end{align*}
for $\bar{R}_T \triangleq \frac{1}{T} \sum_{t=1}^T R_t$ and $\bar{r}_T \triangleq \frac{1}{T} \sum_{t=1}^T r_t$.

Next, we swap the order of the derivative and integral above using the dominated convergence theorem. 
\begin{align}
    &= \sum_{a_{1:T} \in \mathcal{A}^T} \int_{r_{1:T} \in \real^T} \nabla_{\lambda} \big\{ \bar{r}_T \cdot \PP_{\lambda, \pi_1}( A_{1:T} = a_{1:T}, R_{1:T} = r_{1:T} ) \big\} d r_{1:T}
    \label{eqn:bigswap}
\end{align}
Specifically, we can apply the dominated convergence theorem theorem because
\begin{itemize}[leftmargin=*]
    \item Note $|\bar{R}_T| = \big|\frac{1}{T} \sum_{t=1}^T R_t \big|\leq \frac{1}{T} \sum_{t=1}^T \left| R_t \right| \leq \frac{1}{T} \sum_{t=1}^T \sum_{a \in \MC{A}} \left| R_t(a) \right|$ a.s.
    \item By \eqref{eqn:rewriteDeriv},
    \begin{align*}
        \left| \bar{R}_T \cdot \nabla_{\lambda} \PP_{\lambda, \pi_1}( A_{1:T}, R_{1:T} ) \right|
        &\leq 1 \cdot \Bigg| \bar{R}_T \cdot \sum_{t'=1}^T \nabla_{\lambda} \log P_{\lambda_{A_{t'}}} (R_{t'} \mid A_{t'} ) \Bigg| \\
        &\leq |\bar{R}_T| \cdot \sum_{t'=1}^T \left| \nabla_{\lambda} \log P_{\lambda_{A_{t'}}} (R_{t'} \mid A_{t'} ) \right| \\
        &\leq \sum_{a \in \MC{A}} 
        \left\{ \frac{1}{T} \sum_{t=1}^T \left| R_t(a) \right| \right\rbrace  \cdot \left\lbrace \sum_{t'=1}^T \big| \nabla_{\lambda} \log P_{\lambda_{a}} (R_{t'}(a) ) \big| \right\}
    \end{align*}
    The above is integrable since $P_{\lambda_a}( R_t(a) )$ and $\nabla_{\lambda} \log P_{\lambda_{A_{t'}}} ( R_t(a) )$ have finite second moment by assumption of the proposition.
\end{itemize}

By \eqref{eqn:rewriteDeriv}, we have that \eqref{eqn:bigswap} equals the following:
\begin{align*}
    &= \sum_{a_{1:T} \in \mathcal{A}^T} \int_{r_{1:T} \in \real^T} \bar{r}_T \cdot \bigg( \sum_{t'=1}^T \nabla_{\lambda} \log P_{\lambda_{a_{t'}}} (r_{t'} \mid a_{t'}) \bigg) \cdot \PP_{\lambda, \pi_1}( A_{1:T} = a_{1:t}, R_{1:T} = r_{1:T} ) d r_{1:T}
\end{align*}
By the law of total probability, %
\begin{align*}
    &= \E_{\lambda, \pi_1} \bigg[ \bar{R}_T \cdot \bigg( \sum_{t'=1}^T \nabla_{\lambda} \log P_{\lambda_{A_{t'}}} (R_{t'} \mid A_{t'}) \bigg) \bigg]
    = \frac{1}{T} \sum_{t=1}^T \E_{\lambda, \pi_1} \bigg[ R_t \cdot \bigg( \sum_{t'=1}^T \nabla_{\lambda} \log P_{\lambda_{A_{t'}}} (R_{t'} \mid A_{t'}) \bigg) \bigg]
\end{align*}
By law of iterated expectations, since for $t < t'$, $\E_{\lambda, \pi_1} \big[ R_t \cdot \nabla_{\lambda} \log P_{\lambda_{A_{t'}}} (R_{t'} \mid A_{t'}) \mid \HH_{t'-1} \big] \\
= R_t \cdot \E_{\lambda, \pi_1} \big[ \nabla_\lambda \log P_{\lambda_{A_{t'}}} (R_{t'} \mid A_{t'}) \mid \HH_{t'-1} \big] 
= R_t \cdot \int_{\real} \nabla_\lambda \log P_{\lambda_{A_{t'}}} (r \mid A_{t'}) \cdot P_{\lambda_{A_{t'}}} (r \mid A_{t'}) dr \\
= R_t \cdot \int_{\real} \nabla_\lambda P_{\lambda_{A_{t'}}} (r \mid A_{t'}) dr
= R_t \cdot \nabla_\lambda \int_{\real} P_{\lambda_{A_{t'}}} (r \mid A_{t'}) dr
= R_t \cdot \nabla_\lambda (1) = 0$, 
\begin{align*}
    = \frac{1}{T} \sum_{t=1}^T \E_{\lambda, \pi_1} \bigg[  R_t \cdot \bigg( \sum_{t'=1}^{t} \nabla_{\lambda} \log P_{\lambda_{A_{t'}}} (R_{t'} \mid A_{t'}) \bigg) \bigg]
    = \frac{1}{T} \sum_{t=1}^T \sum_{t'=1}^{t} \E_{\lambda, \pi_1} \left[  R_t \cdot \nabla_{\lambda} \log P_{\lambda_{A_{t'}}} (R_{t'} \mid A_{t'}) \right]
\end{align*}
By swapping the summation order,
\begin{align*}
    = \frac{1}{T} \sum_{t'=1}^{T} \sum_{t=t'}^T \E_{\lambda, \pi_1} \left[  R_t \cdot \nabla_{\lambda} \log P_{\lambda_{A_{t'}}} (R_{t'} \mid A_{t'}) \right]
    = \frac{1}{T} \sum_{t=1}^{T} \E_{\lambda, \pi_1} \bigg[ \nabla_{\lambda} \log P_{\lambda_{A_t}} (R_{t} \mid A_{t}) \cdot \sum_{t'=t}^T  R_{t'} \bigg].
\end{align*}
The last equality swaps $t$ and $t'$.

\paragraph{Uniformly differentiable.}
Let $\bar{\Lambda}$ be the closure of $\Lambda$. Since $\bar{\Lambda}$ is compact by assumption, by the Heine-Cantor Theorem, $f_T(\lambda, \pi_1)$ is uniformly differentiable if $\nabla_{\lambda} f_T(\lambda, \pi_1)$ is continuous at all $\lambda \in \bar{\Lambda}$.
\begin{align*}
    &\frac{1}{T} \sum_{t=1}^{T} \E_{\lambda, \pi_1} \bigg[ \nabla_\lambda \log P_{\lambda_{A_t}} (R_{t} \mid A_{t}) \cdot \sum_{t'=t}^T  R_{t'} \bigg] \\
    &= \frac{1}{T} \sum_{t=1}^{T} \sum_{a_{1:T} \in \mathcal{A}^T} \int_{r_{1:T} \in \real^T} \hspace{-2mm} \nabla_\lambda \log P_{\lambda_{a_t}} (r_t \mid a_t) \cdot \bigg( \sum_{t'=t}^T r_{t'} \bigg) \cdot \prod_{t'=1}^T \big\{ \pi_1(a_{t'} \mid h_{t'-1}) P_{\lambda_{a_{t'}}} (r_{t'} \mid a_{t'}) \big\} d r_{1:T}
\end{align*}
Since $\nabla_{\lambda_a} \log P_{\lambda_a} (R_t \mid A_t = a)$ and $P_{\lambda_{a}} (R_t \mid A_t = a)$ are almost surely continuous in $\lambda_a$ for every $a \in \MC{A}$, for any sequence $\lambda^{(n)} \to \lambda$ as $n \to \infty$,
\begin{multline*}
    \nabla_{\lambda^{(n)}} \log P_{\lambda_{A_t}^{(n)}} (R_t \mid A_t) \cdot \bigg( \sum_{t'=t}^T R_{t'} \bigg) \cdot \prod_{t'=1}^T \big\{ \pi_1(A_{t'} \mid H_{t'-1}) \cdot P_{\lambda_{A_{t'}}^{(n)}} (R_{t'} \mid A_{t'} ) \big\} \\
    \to \nabla_{\lambda} \log P_{\lambda_{A_t}} (R_t \mid A_t) \cdot \bigg( \sum_{t'=t}^T R_{t'} \bigg) \cdot \prod_{t'=1}^T \Big\{ \pi_1(A_{t'} \mid H_{t'-1}) \cdot P_{\lambda_{A_{t'}}} (R_{t'} \mid A_{t'} ) \Big\} ~~ \TN{w.p.}~1.
\end{multline*}
Furthermore, by Dominated Convergence Theorem 
\begin{multline*}
    \E_{\lambda^{(n)}, \pi_1} \bigg[ \nabla_{\lambda^{(n)}} \log P_{\lambda_{A_t}^{(n)}} (R_t \mid A_t) \cdot \bigg( \sum_{t'=t}^T R_{t'} \bigg) \cdot \prod_{t'=1}^T \Big\{ \pi_1(A_{t'} \mid H_{t'-1}) \cdot P_{\lambda_{A_{t'}}^{(n)}} (R_{t'} \mid A_{t'} ) \Big\} \bigg] \\
    \to \E_{\lambda, \pi_1} \bigg[ \nabla_{\lambda} \log P_{\lambda_{A_t}} (R_t \mid A_t) \cdot \bigg( \sum_{t'=t}^T R_{t'} \bigg) \cdot \prod_{t'=1}^T \Big\{ \pi_1(A_{t'} \mid H_{t'-1}) \cdot P_{\lambda_{A_{t'}}} (R_{t'} \mid A_{t'} ) \Big\} \bigg]
\end{multline*}
We are able to apply dominated convergence theorem because the following holds with probability $1$:
\begin{multline*}
    \bigg| \nabla_{\lambda^{(n)}} \log P_{\lambda_{A_t}^{(n)}} (R_t \mid A_t) \cdot \bigg( \sum_{t'=t}^T R_{t'} \bigg) \cdot \prod_{t'=1}^T \Big\{ \pi_1(A_{t'} \mid H_{t'-1}) \cdot P_{\lambda_{A_{t'}}^{(n)}} (R_{t'} \mid A_{t'} ) \Big\} \bigg| \\
    \leq \sum_{a \in \MC{A}} \left| \nabla_{\lambda^{(n)}} \log P_{\lambda_a^{(n)}} \big( R_t(a) \big) \right| \cdot \sum_{t'=t}^T \sum_{a \in \MC{A}}\left| R_{t'}(a) \right|,
\end{multline*}
and $\left| R_{t'}(a) \right|$ and $\big| \nabla_{\lambda^{(n)}} \log P_{\lambda_a^{(n)}} \big( R_t(a) \big) \big|$ have finite second moment by assumption.
\end{proof}

\subsection{Proof of Theorem \ref{thm:asymptoticCIAdaptive}: Simulation-Based Inference: Adaptive $\pi_0$}
\label{app:proofCIAdaptive}

\begin{customthm}{\ref{thm:asymptoticCIAdaptive}}[Simulation-Based Inference Confidence Intervals: Adaptive $\pi_0$]
    Under Assumptions \ref{assump:differentiable}-\ref{assump:lambdaNormalityAdaptive}, we can construct the following asymptotically valid confidence intervals for $\theta^*$ with any $\alpha \in (0, 1)$,
    \begin{align*}
        \lim_{\Toff \to \infty} \inf_{\lambda^* \in \Lambda} \PP_{P_{\lambda^*}, \pi_0} \Big( \theta_T^* \in \Big[ \hat{\theta}_T \pm \sqrt{ \chi_{\TN{dim}(\lambda), 1-\alpha}^2 } \cdot \big\| (V_{\Toff}^{\top} V_{\Toff})^{-1/2} \nabla f_T( \hat\lambda, \pi_1 ) \big\|_2 \Big] \Big) \geq 1-\alpha.
    \end{align*}
\end{customthm}
\begin{proof}
By Assumption \ref{assump:lambdaNormalityAdaptive}, 
\begin{align*}
    \lim_{T_{\mathrm{offline}} \to \infty} \inf_{\lambda^* \in \Lambda} \PP_{\lambda^*, \pi_0} \left( ( \hat{\lambda} - \lambda^* )^\top V_{\Toff}^{\top} V_{\Toff} ( \hat{\lambda} - \lambda^* ) \leq \chi^2_{\TN{dim}(\lambda), 1-\alpha} \right) \geq 1-\alpha.
\end{align*}
Thus, we have that, $\lim_{T_{\mathrm{offline}} \to \infty} \inf_{\lambda^* \in \Lambda} \PP_{\lambda^*, \pi_0} \left( \lambda^* \in C_{T_{\mathrm{offline}}} \right) \geq 1-\alpha$ for
\begin{align*}
    C_{T_{\mathrm{offline}}} \triangleq \Big\{ \lambda : \big( \hat{\lambda} - \lambda \big)^\top V_{\Toff}^{\top} V_{\Toff} \big( \hat{\lambda} - \lambda \big) \leq \chi^2_{\TN{dim}(\lambda), 1-\alpha} \Big\}.
\end{align*}
This means that
\begin{align}
    \lim_{T_{\mathrm{offline}} \to \infty} \inf_{\lambda^* \in \Lambda} \PP_{\lambda^*, \pi_0} \big( f_T(\lambda^*, \pi_1) \in \big\{ f_T(\lambda, \pi_1) : \lambda \in C_{T_{\mathrm{offline}}} \big\} \big) \geq 1-\alpha.
    \label{eqn:appDeltaAdaptive}
\end{align}

\paragraph{Simplifying the confidence interval in \eqref{eqn:appDeltaAdaptive}.}
By reparameterizing $\lambda = \hat\lambda + h$, note that $\lambda \in C_{\Toff}$ if and only if $h^\top V_{\Toff}^{\top} V_{\Toff} h \leq \chi^2_{\TN{dim}(\lambda), 1-\alpha}$. Thus,
\begin{align}
    C_{T_{\mathrm{offline}}} = \left\{ \hat\lambda + h :  h^\top V_{\Toff}^{\top} V_{\Toff} h \leq \chi^2_{\TN{dim}(\lambda), 1-\alpha} \right\}.
    \label{eqn:adapCrewrite}
\end{align}

Since, by Assumption \ref{assump:lambdaNormalityAdaptive}, the minimum eigenvalue of $V_{\Toff}$ goes to infinity with probability $1$; thus, with probability going to $1$, there must exist some $\delta < \infty$, such that for all $h$ such that $h^\top V_{\Toff}^{\top} V_{\Toff} h \leq \chi^2_{\TN{dim}(\lambda), 1-\alpha}$, we have $\| h \|_2 < \delta$.
Note also by the uniform differentiability of $f$ (Assumption \ref{assump:differentiable}),
\begin{align*}
    \sup_{\lambda^* \in \Lambda} \, \sup_{0 < \| h \|_2 < \delta} \, \left| f \big( \hat\lambda + h, \, \pi_1 \big) - f_T( \hat\lambda, \pi_1 ) - \nabla f_T( \hat\lambda, \pi_1 )^\top h \right| \Pto 0.
\end{align*}
Thus, we have that
\begin{align*}
    \big\{ f_T(\lambda, &\pi_1) : \lambda \in C_{T_{\mathrm{offline}}} \big\} = f_T( \hat\lambda, \pi_1 ) + \left\{ \nabla f_T( \hat\lambda, \pi_1 )^\top h : h^\top V_{\Toff}^{\top} V_{\Toff} h \leq \chi^2_{\TN{dim}(\lambda), 1-\alpha} \right\} + \sup_{\lambda^* \in \Lambda} o_{P_{\lambda^*, \pi_0}} (1).
\end{align*}
Note that by Exercise 4.21 in \citet{boyd2004convex},
\begin{align*}
    \sup_{h \, : \, h^\top V_{\Toff}^{\top}  V_{\Toff} h \leq \chi^2_{\TN{dim}(\lambda), 1-\alpha}} \nabla f_T( \hat\lambda, \pi_1 )^\top h &= \sqrt{ \chi^2_{\TN{dim}(\lambda), 1-\alpha} \nabla f_T( \hat\lambda, \pi_1 )^\top (V_{\Toff}^{\top} V_{\Toff})^{-1} \nabla f_T( \hat\lambda, \pi_1 ) } \\
    &= \sqrt{ \chi^2_{\TN{dim}(\lambda), 1-\alpha} } \cdot \big\| (V_{\Toff}^{\top} V_{\Toff})^{-1/2} \nabla f_T( \hat\lambda, \pi_1 ) \big\|_2 
\end{align*}
Thus,
\begin{align*}
    \big\{ f_T(\lambda, \pi_1) : \lambda \in C_{T_{\mathrm{offline}}} \big\} 
    = \Big[ f_T(\hat\lambda, \pi_1) \pm \sqrt{ \chi^2_{\TN{dim}(\lambda), 1-\alpha} }  \cdot \big\| (V_{\Toff}^{\top} V_{\Toff})^{-1/2} \nabla f_T( \hat\lambda, \pi_1 ) \big\|_2 \Big] + \sup_{\lambda^* \in \Lambda} o_{P_{\lambda^*, \pi_0}} (1).
\end{align*}
\end{proof}

\subsection{Proof of Corollary \ref{corr:contrast}: Policy Contrasts}
\label{app:policyContrast}

\begin{customcor}{\ref{corr:contrast}}[Policy Contrasts]
    Under Assumptions \ref{assump:lambdaNormality} and~\ref{assump:differentiable}, the plug-in estimator $\hat{\delta}_T$ %
    satisfies  
    \begin{align*}
        \sqrt{ \Toff } (\hat{\delta}_T - \delta_T^*) \Dto \N \Big( 0, \, \nabla_\lambda  g_T(\lambda^*, \pi_1, \pi_2)^\top \Sigma_{\lambda^*,\pi_0} \nabla_\lambda g_T(\lambda^*, \pi_1, \pi_2) \Big) \quad \TN{as} \quad \Toff \to \infty,
    \end{align*}
    uniformly over the underlying  $P_{\lambda^*, \pi_0} \in \{ P_{\lambda, \pi_0} : \lambda \in \Lambda \}$.  
    Thus, assuming a non-degenerate limiting variance, i.e., $\inf_{\lambda^* \in \Lambda} \nabla_\lambda g_T(\lambda^*, \pi_1, \pi_2)^\top \Sigma_{\lambda^*, \pi_0} \nabla_\lambda g_T(\lambda^*, \pi_1, \pi_2) > 0$, then for any $\alpha \in (0, 1)$,%
    \begin{align*}
        \lim_{\Toff \to \infty} \inf_{\lambda^* \in \Lambda} \PP_{P_{\lambda^*}, \pi_0} \bigg( \delta_T^* \in \bigg[ \hat{\delta}_T \pm \frac{ z_{1-\alpha/2} }{ \sqrt{ \Toff} } \sqrt{ \nabla_\lambda g_T(\hat{\lambda}, \pi_1, \pi_2)^\top \hat\Sigma_{\hat{\lambda},\pi_0} \nabla_\lambda g_T(\hat{\lambda}, \pi_1, \pi_2)  } 
        \bigg] \bigg) \geq 1-\alpha.
    \end{align*}
where $\hat\Sigma_{\hat{\lambda},\pi_0}$ is  any  positive definite matrix satisfying $\hat\Sigma_{\hat{\lambda},\pi_0} \Pto \Sigma_{\lambda^*, \pi_0}$, uniformly over $\lambda^\ast \in \Lambda$. 
\end{customcor}

\begin{proof}
The result holds by an argument very similar to the proof of Theorem \ref{thm:normality}.

By Assumption \ref{assump:lambdaNormality}, $\sqrt{ \Toff } ( \hat{\lambda} - \lambda^* ) \Dto N \big(0, \Sigma_{\lambda^*,\pi_0} \big)$ uniformly over $P_{\lambda^*} \in \{ P_{\lambda} : \lambda \in \Lambda \}$. By Slutsky's Theorem,
\begin{align}
    \nabla g_T(\lambda^*, \pi_1, \pi_2) \sqrt{ \Toff } \big( \hat{\lambda} - \lambda^* \big) \Dto \N \big(0, \nabla g_T(\lambda^*, \pi_1, \pi_2)^\top \Sigma_{\lambda^*,\pi_0} \nabla g_T(\lambda^*, \pi_1, \pi_2) \big)
    \label{eqn:intermed0B}
\end{align}
uniformly over $P_{\lambda^*, \pi_0} \in \{ P_{\lambda, \pi_0} : \lambda \in \Lambda \}$. By Lemma \ref{lemma:linearization},
\begin{align}
    \sqrt{ \Toff } \big( g_T( \hat{\lambda}, \pi_1, \pi_2 ) - g_T( \lambda^*, \pi_1, \pi_2) \big) - \nabla g_T(\lambda^*, \pi_1, \pi_2) \sqrt{ \Toff } \big( \hat{\lambda} - \lambda^* \big) \Pto 0,
    \label{eqn:intermed1B}
\end{align}
uniformly over $P_{\lambda^*, \pi_0} \in \{ P_{\lambda, \pi_0} : \lambda \in \Lambda \}$. The asymptotic normality result in the Theorem holds by \eqref{eqn:intermed0B}, \eqref{eqn:intermed1B}, and Slutsky's theorem. 

Thus, we have that
\begin{align*}
    \left( \nabla_\lambda g_T(\lambda^*, \pi_1, \pi_2)^\top \Sigma_{\lambda^*,\pi_0}  \nabla_\lambda g_T(\lambda^*, \pi_1, \pi_2) \right)^{-1/2}
    \sqrt{ \Toff } (\hat{\delta}_T - \delta_T^*) \Dto \N \left( 0, 1 \right),
\end{align*}
uniformly over $P_{\lambda^*, \pi_0} \in \{ P_{\lambda, \pi_0} : \lambda \in \Lambda \}$.
For the confidence interval result, by Slutsky's Theorem it is sufficient to show that
\begin{align*}
    \frac{
    \nabla_\lambda g_T(\hat\lambda, \pi_1, \pi_2)^\top \hat\Sigma_{\hat\lambda,\pi_0}  \nabla_\lambda g_T(\hat\lambda, \pi_1, \pi_2) }{
    \nabla_\lambda g_T(\lambda^*, \pi_1, \pi_2)^\top \Sigma_{\lambda^*,\pi_0} \nabla_\lambda g_T(\lambda^*, \pi_1, \pi_2) 
    }
    \Pto 1,
\end{align*}
uniformly over $P_{\lambda^*, \pi_0} \in \{ P_{\lambda, \pi_0} : \lambda \in \Lambda \}$. Since the denominator $\nabla_\lambda g_T(\lambda^*, \pi_1, \pi_2)^\top \Sigma_{\lambda^*,\pi_0}  \nabla_\lambda g_T(\lambda^*, \pi_1, \pi_2)$ is uniformly bounded above a constant since by our non-degeneracy assumption, \\
$\inf_{\lambda^* \in \Lambda} \nabla_\lambda g_T(\lambda^*, \pi_1, \pi_2)^\top \Sigma_{\lambda^*, \pi_0} \nabla_\lambda g_T(\lambda^*, \pi_1, \pi_2) > 0$, it sufficient to show that 
\begin{align}
    \label{eqn:consistentVarB}
    \nabla_\lambda g_T(\hat\lambda, \pi_1, \pi_2)^\top \hat\Sigma_{\hat\lambda,\pi_0}  \nabla_\lambda g_T(\hat\lambda, \pi_1, \pi_2) \Pto \nabla_\lambda g_T(\lambda^*, \pi_1, \pi_2)^\top \Sigma_{\lambda^*,\pi_0} \nabla_\lambda g_T(\lambda^*, \pi_1, \pi_2),
\end{align}
uniformly over $P_{\lambda^*, \pi_0} \in \{ P_{\lambda, \pi_0} : \lambda \in \Lambda \}$. \eqref{eqn:consistentVarB} above holds by Slutsky's Theorem since $\hat\Sigma_{\hat\lambda,\pi_0}$ is uniformly consistent, and since $\nabla_\lambda g_T(\hat\lambda, \pi_1, \pi_2) \Pto \nabla_\lambda g_T(\lambda^*, \pi_1, \pi_2)$ uniformly (which holds by continuous mapping theorem since $\hat\lambda \Pto \lambda^*$ uniformly and $\nabla g_T(\lambda, \pi_1, \pi_2)$ is continuous by Assumption \ref{assump:differentiable}).
\end{proof}

\subsection{Proof of Proposition \ref{prop:regretClose}: Value Difference Bounded by Regret}
\label{app:closeRegret}
\begin{customprop}{\ref{prop:regretClose}}[Value Difference Bounded by Regret]
    For any algorithm $\pi_1$,
    \begin{align*}
        \big| \theta^{\Gn}_T - \theta_T^* \big| \leq \frac{2}{T} \sup_{P' \in \MC{P}_{\lambda^*}} \MC{R}_T(P', \pi_1).
    \end{align*}
    Suppose we can construct an asymptotically valid confidence interval for $\theta_T^{\Gn}$ for any $\alpha \in (0, 1)$, i.e., \\
    $\lim_{\Toff \to \infty} \inf_{\lambda^* \in \Lambda} \PP_{P_{\lambda^*}, \pi_0} \left( \theta_T^{\Gn} \in \left[ \hat\theta_T \pm \nu_{\Toff}(\alpha) \right] \right) \geq 1-\alpha$, then we can construct the following asymptotically valid confidence interval for $\theta^*_T$:
    \begin{align*}
        \lim_{\Toff \to \infty} \, \inf_{\lambda^* \in \Lambda} \PP_{P_{\lambda^*}, \pi_0} \bigg( \theta_T^* \in \bigg[ \hat\theta_T \pm \bigg( \nu_{\Toff}(\alpha) + \frac{2}{T} \sup_{P' \in \MC{P}_{\lambda^*}} \MC{R}_T(P', \pi_1) \bigg) \bigg] \bigg) \geq 1-\alpha.
    \end{align*}
\end{customprop}

\begin{proof}
\bo{Showing the first statement of Proposition \ref{prop:regretClose}.}
By definition
\begin{align*}
    \big| \theta^{\Gn}_T - \theta_T^* \big|
    = \left| \E_{P_{\lambda^*}^\Gn, \pi_1} \left[ \frac{1}{T} \sum_{t=1}^T R_t \right] - \E_{P, \pi_1} \left[ \frac{1}{T} \sum_{t=1}^T R_t \right] \right|
\end{align*}
Note that both $P$ and $P_{\lambda^*}^\Gn$ have same mean rewards, we have $\E_{P_{\lambda^*}^\Gn} \big[ \frac{1}{T} \sum_{t=1}^T R_t(a^*) \big] = \E_{P} \big[ \frac{1}{T} \sum_{t=1}^T R_t(a^*) \big]$, where $a^\ast = \TN{argmax}_{a \in \MC{A}} \mu_a$ is an optimal action. Thus by adding and subtracting $\E_{P} \big[ \frac{1}{T} \sum_{t=1}^T R_t(a^*) \big]$ and applying the triangle inequality we obtain
\begin{align*}
    \big| \theta^{\Gn}_T - \theta_T^* \big| &\leq \bigg| \E_{P_{\lambda^*}^\Gn, \pi_1} \bigg[ \frac{1}{T} \sum_{t=1}^T R_t \bigg] - \E_{P_{\lambda^*}^\Gn} \bigg[ \frac{1}{T} \sum_{t=1}^T R_t(a^*) \bigg] \bigg| 
    + \bigg| \E_{P} \bigg[ \frac{1}{T} \sum_{t=1}^T R_t(a^*) \bigg] - \E_{P, \pi_1} \bigg[ \frac{1}{T} \sum_{t=1}^T R_t \bigg] \bigg| \\
    &\leq \frac{2}{T} \sup_{P' \in \MC{P}_{\lambda^*}} \mathcal{R}_T(P', \pi_1).
\end{align*}
The final inequality holds by the definition of regret.

\bo{Showing the second statement of Proposition \ref{prop:regretClose}.}
By the assumption of the proposition,
\begin{align*}
    &1-\alpha \leq \lim_{\Toff \to \infty} \, \inf_{\lambda^* \in \Lambda} \PP_{P_{\lambda^*}, \pi_0} \Big( \theta_T^{\Gn} \in \Big[ \hat\theta_T \pm \nu_{\Toff}(\alpha) \Big] \Big) \\
    &= \lim_{\Toff \to \infty} \, \inf_{\lambda^* \in \Lambda} \PP_{P_{\lambda^*}, \pi_0} \Big( \big| \theta_T^{\Gn} - \hat\theta_T \big| \leq \nu_{\Toff}(\alpha) \Big)
\end{align*}
By the first statement of the proposition, i.e., that $\big| \theta^{\Gn}_T - \theta_T^* \big| \leq \frac{2}{T} \sup_{P' \in \MC{P}_{\lambda^*}} \MC{R}_T(P', \pi_1)$, we have that
\begin{align*}
    &= \lim_{\Toff \to \infty} \, \inf_{\lambda^* \in \Lambda} \PP_{P_{\lambda^*}, \pi_0} \bigg( \big| \theta_T^{\Gn} - \hat\theta_T \big| + \big| \theta_T^* - \theta_T^{\Gn} \big| \leq \nu_{\Toff}(\alpha) + \frac{2}{T} \sup_{P' \in \MC{P}_{\lambda^*}} \mathcal{R}_T(P', \pi_1) \bigg)
\end{align*}
Since $\big| \theta_T^* - \hat\theta_T \big| \leq \big| \theta_T^* - \theta_T^{\Gn} \big| + \big| \theta_T^{\Gn} - \hat\theta_T \big|$,
\begin{align*}
    &\leq \lim_{\Toff \to \infty} \, \inf_{\lambda^* \in \Lambda} \PP_{P_{\lambda^*}, \pi_0} \bigg( \big| \theta_T^* - \hat\theta_T \big| \leq \nu_{\Toff}(\alpha) + \frac{2}{T} \sup_{P' \in \MC{P}_{\lambda^*}} \mathcal{R}_T(P', \pi_1) \bigg) \\
    &= \lim_{\Toff \to \infty} \, \inf_{\lambda^* \in \Lambda} \PP_{P_{\lambda^*}, \pi_0} \bigg( \theta_T^* \in \bigg[ \hat\theta_T \pm \bigg( \nu_{\Toff}(\alpha) + \frac{2}{T} \sup_{P' \in \MC{P}_{\lambda^*}} \MC{R}_T(P', \pi_1) \bigg) \bigg] \bigg).
\end{align*}
\end{proof}

\subsection{Gradient under Common Reward Distributions}
\label{app:exampleDistributions}

\begin{customex}{\ref{ex:bernoulli}}[Bernoulli $P_\lambda$]
    Here $\lambda = \{\lambda_a\}_{a \in \MC{A}}$ where $\lambda_a = \mu_a$ represents the mean reward after taking action $a$.
    \begin{itemize}[leftmargin=*]
        \item $P_{\lambda_a}(R_t \mid A_t = a) = \mu_a^{R_t} (1 - \mu_a)^{1-R_t}$
        \item $\log P_{\lambda_a}(R_t \mid A_t = a) = R_t \log \mu_a + (1-R_t) \log (1 - \mu_a)$
        \item $\nabla_{\lambda_a} \log P_{\lambda_a}(R_t \mid A_t = a) 
        = \frac{ R_t }{\mu_a} - \frac{1-R_t}{1 - \mu_a} 
        = \frac{R_t (1 - \mu_a) - (1-R_t) \mu_a}{\mu_a (1 - \mu_a)}
        = \frac{R_t - \mu_a}{\mu_a (1 - \mu_a)}$
    \begin{align*}
        \nabla_{\lambda} f_T(\lambda, \pi_1) 
        &= \frac{1}{T} \sum_{t=1}^{T} \E_{\lambda, \pi_1} \begin{bmatrix}
            \mathbbm{1}(A_t = 1) \frac{R_t - \mu_{1}}{\mu_{1} (1 - \mu_{1})} G_t \\
            \vdots \\
            \mathbbm{1}(A_t = |\MC{A}|) \frac{R_t - \mu_{|\MC{A}|}}{\mu_{|\MC{A}|} (1 - \mu_{|\MC{A}|})} G_t
        \end{bmatrix} \nonumber \\
        &= \frac{1}{T} \sum_{t=1}^{T} \E_{\lambda, \pi_1} 
        \begin{bmatrix}
            \mathbbm{1}(A_t = 1) \left( 1 + \frac{R_t - \mu_{1}}{\mu_{1} (1 - \mu_{1})} G_{t+1} \right) \\
            \vdots \\
            \mathbbm{1}(A_t = |\MC{A}|) \left( 1 + \frac{R_t - \mu_{|\MC{A}|}}{\mu_{|\MC{A}|} (1 - \mu_{|\MC{A}|})} G_{t+1} \right)
        \end{bmatrix}
    \end{align*}
    The last equality above holds since $\E \left[ \frac{ (R_t - \mu_a) }{\mu_a (1-\mu_a)} R_t \mid A_t = a \right] 
    = \E \left[ \frac{ (R_t - \mu_a) }{\mu_a (1-\mu_a)} (R_t - \mu_a + \mu_a) \mid A_t = a \right] 
    = \E \left[ \frac{(R_t - \mu_a)^2}{\mu_a (1-\mu_a)} \mid A_t = a \right] = 1$.
    \end{itemize}
\end{customex}

\begin{customex}{\ref{ex:gaussian}}[Gaussian $P_\lambda$]
    Here $\lambda = \{\lambda_a\}_{a \in \MC{A}}$ where $\lambda_a = (\mu_a, \sigma_a^2)$ represents the mean and variance of the reward after taking action $a$.
    \begin{itemize}[leftmargin=*]
        \item $P_{\lambda_a}(R_t \mid A_t = a) = \frac{1}{ \sqrt{2 \pi \sigma_a^2}} \exp \left( -\frac{(R_t - \mu_a)^2}{ 2 \sigma_a^2 } \right)$
        \item $\log P_{\lambda_a}(R_t \mid A_t = a) = - \frac{1}{2} \log(2 \pi) - \frac{1}{2} \log(\sigma_a^2) - \frac{(R_t - \mu_a)^2}{ 2 \sigma_a^2 }$
        \item $\nabla_{\lambda_a} \log P_{\lambda_a}(R_t \mid A_t = a)
        = \begin{pmatrix}
            \frac{\partial}{\partial \mu_a} \log P_{\lambda_a}(R_t \mid A_t = a) \\
            \frac{\partial}{\partial \sigma_a^2} \log P_{\lambda_a}(R_t \mid A_t = a)
        \end{pmatrix}
        = \begin{pmatrix}
            \frac{R_t - \mu_a}{\sigma_a^2} \\
            -\frac{1}{2 \sigma_a^2} + \frac{(R_t - \mu_a)^2}{ 2 \sigma_a^4 }
        \end{pmatrix}$
        \item Thus, for $G_t \triangleq \sum_{t'=t}^T R_{t'}$,
    \begin{align*}
        \nabla_{\lambda} f_T(\lambda, \pi_1) 
        &= \frac{1}{T} \sum_{t=1}^{T} \E_{\lambda, \pi_1} \begin{bmatrix}
            \mathbbm{1}(A_t = 1) \frac{R_t - \mu_1}{\sigma_1^2} G_t \\
            \mathbbm{1}(A_t = 1) \left( -\frac{1}{2\sigma_1^2} + \frac{(R_t - \mu_1)^2}{ 2 \sigma_1^4 } \right) G_t \\
            \vdots \\
            \mathbbm{1}(A_t = |\MC{A}|) \frac{R_t - \mu_{|\MC{A}|}}{\sigma_{|\MC{A}|}^2} G_t \\
            \mathbbm{1}(A_t = |\MC{A}|) \left( -\frac{1}{2\sigma_{|\MC{A}|}^2} + \frac{(R_t - \mu_{|\MC{A}|})^2}{ 2 \sigma_{|\MC{A}|}^4 } \right) G_t
        \end{bmatrix} \nonumber \\
        &= \frac{1}{T} \sum_{t=1}^{T} \E_{\lambda, \pi_1} \begin{bmatrix}
            \mathbbm{1}(A_t = 1) \left( 1 + \frac{R_t - \mu_1}{\sigma_1^2} G_{t+1} \right) \\
            \mathbbm{1}(A_t = 1) \left( -\frac{1}{2 \sigma_1^2} + \frac{(R_t - \mu_1)^2}{ 2 \sigma_1^4 } \right) G_{t+1} \\
            \vdots \\
            \mathbbm{1}(A_t = |\MC{A}|) \left( 1 + \frac{R_t - \mu_{|\MC{A}|}}{\sigma_{|\MC{A}|}^2} G_{t+1} \right) \\
            \mathbbm{1}(A_t = |\MC{A}|) \left( -\frac{1}{2 \sigma_{|\MC{A}|}^2} + \frac{(R_t - \mu_{|\MC{A}|})^2}{ 2 \sigma_{|\MC{A}|}^4 } \right) G_{t+1}
        \end{bmatrix}
    \end{align*}    
    The last equality above holds since (i) $\E \left[ \frac{ (R_t - \mu_a) }{\sigma_a^2} R_t \mid A_t = a \right] 
    = \E \left[ \frac{ (R_t - \mu_a) }{\sigma_a^2} ( R_t - \mu_a + \mu_a) \mid A_t = a \right] 
    = \E \left[ \frac{ (R_t - \mu_a)^2 }{\sigma_a^2} \mid A_t = a \right] = 1$, and 
    (ii) $\E \left[ \left( -\frac{1}{2\sigma_a^2} + \frac{(R_t - \mu_a)^2}{ 2 \sigma_a^4 } \right) R_t \mid A_t = a \right] 
    = -\frac{\mu_a}{2\sigma_a^2} + \E \left[ \frac{(R_t - \mu_a)^2}{ 2 \sigma_a^4 } R_t \mid A_t = a \right] 
    = -\frac{\mu_a}{2\sigma_a^2} + \E \left[ \frac{(R_t - \mu_a)^2 \mu_a}{ 2 \sigma_a^4 } + \frac{(R_t - \mu_a)^3}{ 2 \sigma_a^4 } \mid A_t = a \right]
    = -\frac{\mu_a}{2\sigma_a^2} + \frac{\mu_a}{2\sigma_a^2} + 0 = 0$.
    
    \end{itemize}
\end{customex}

\begin{example}[Natural Exponential Family $P_\lambda$]
    \label{ex:exponentialFamily}
    Here $\lambda = \{\lambda_a\}_{a \in \MC{A}}$.
    \begin{itemize}[leftmargin=*]
        \item $P_{\lambda_a}(R_t \mid A_t = a) = h \big( R_t(a) \big) \exp \left( \lambda_a^\top \phi \big( R_t(a) \big) - \Phi (\lambda_a) \right)$
        \item $\log P_{\lambda_a}(R_t \mid A_t = a) = \log h \big( R_t(a) \big) + \lambda_a^\top \phi \big( R_t(a) \big) - \Phi (\lambda_a)$
        \item $\nabla_{\lambda_a} \log P_{\lambda_a}(R_t \mid A_t = a)
        = \phi \big( R_t(a) \big) - \nabla_{\lambda_a} \Phi (\lambda_a)$
        \item Thus, for $G_t \triangleq \sum_{t'=t}^T R_{t'}$,
    \begin{align*}
        \nabla_{\lambda} f_T(\lambda, \pi_1) 
        &= \frac{1}{T} \sum_{t=1}^{T} \E_{\lambda, \pi_1} \begin{bmatrix}
            \mathbbm{1}(A_t = 1) \left( \phi \big( R_t(1) \big) - \frac{\partial}{\partial \lambda_1} \Phi (\lambda_1) \right) G_t \\
            \vdots \\
            \mathbbm{1}(A_t = |\MC{A}|) \left( \phi \big( R_t(|\MC{A}|) \big) - \frac{\partial}{\partial \lambda_{|\MC{A}|}} \Phi (\lambda_{|\MC{A}|}) \right) G_t 
        \end{bmatrix} \nonumber 
    \end{align*}
    \end{itemize}
\end{example}

  \newpage
  \section{Additional Simulations Details and Results}

\subsection{Additional Simulation Results}

 We consider two reward settings: Gaussian and sub-Gaussian. In both cases, the behavior policy is taken to be non-adaptive, while the evaluation policy is adaptive.

\begin{figure}[h] 
    \centering
    \includegraphics[width=0.8\textwidth,trim={1cm 0.5cm 1cm 0.5cm}]{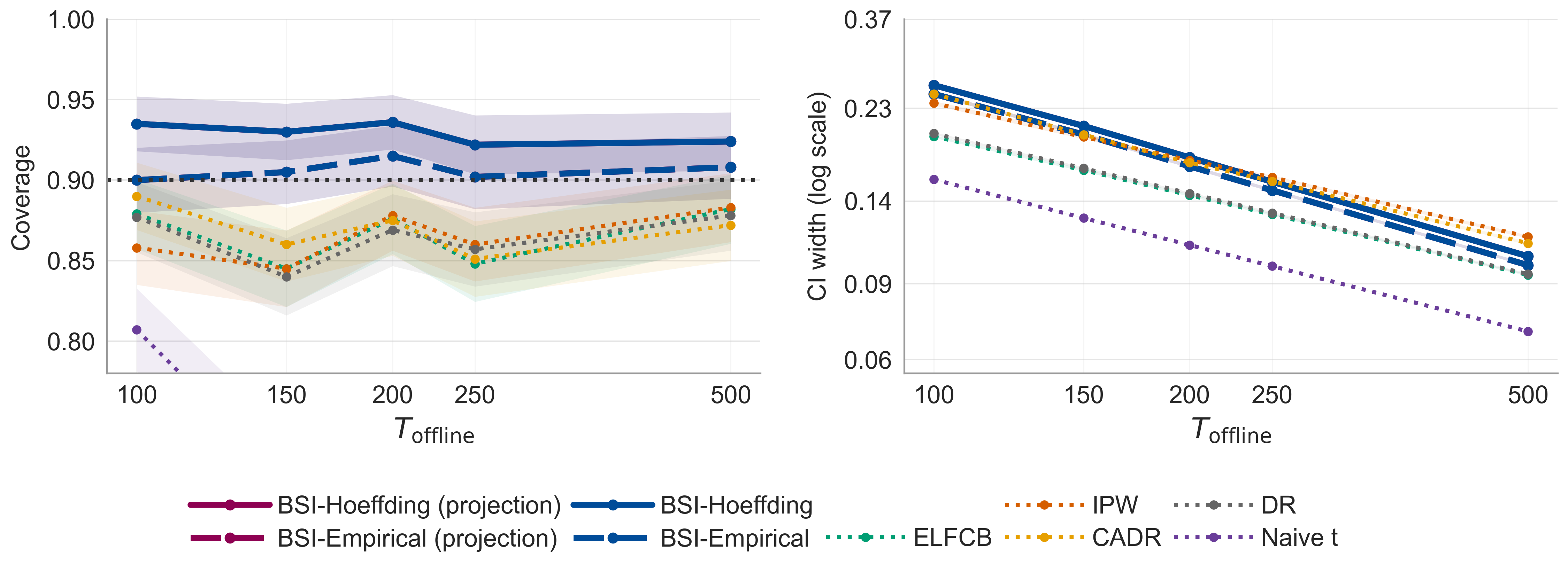}
    \caption{\bo{Sub-Gaussian Reward Distribution Experiments.} Average coverage and width of $90\%$ confidence intervals. All experiments use $T=500$ and are averaged across 1000 replications (shaded regions denote $1$ standard error). Three-arm setting where the true environment rewards are Beta-distributed with hyper-parameters $(0.35,0.65), (0.5,0.5)$ and $ (0.5,0.5)$. The behavior policy is uniform, and the evaluation policy is Thompson Sampling (Gaussian-Gaussian). BSI internally uses $m=20k$ MC repetitions. True parameter $\theta^\star$ is obtained via $50{,}000$ independent MC replications of the true environment.}
    \label{fig:beta-app}
\end{figure}

\begin{figure}[h]
    \centering
    \includegraphics[width=0.8\textwidth,trim={1cm 0.5cm 1cm 0.5cm}]{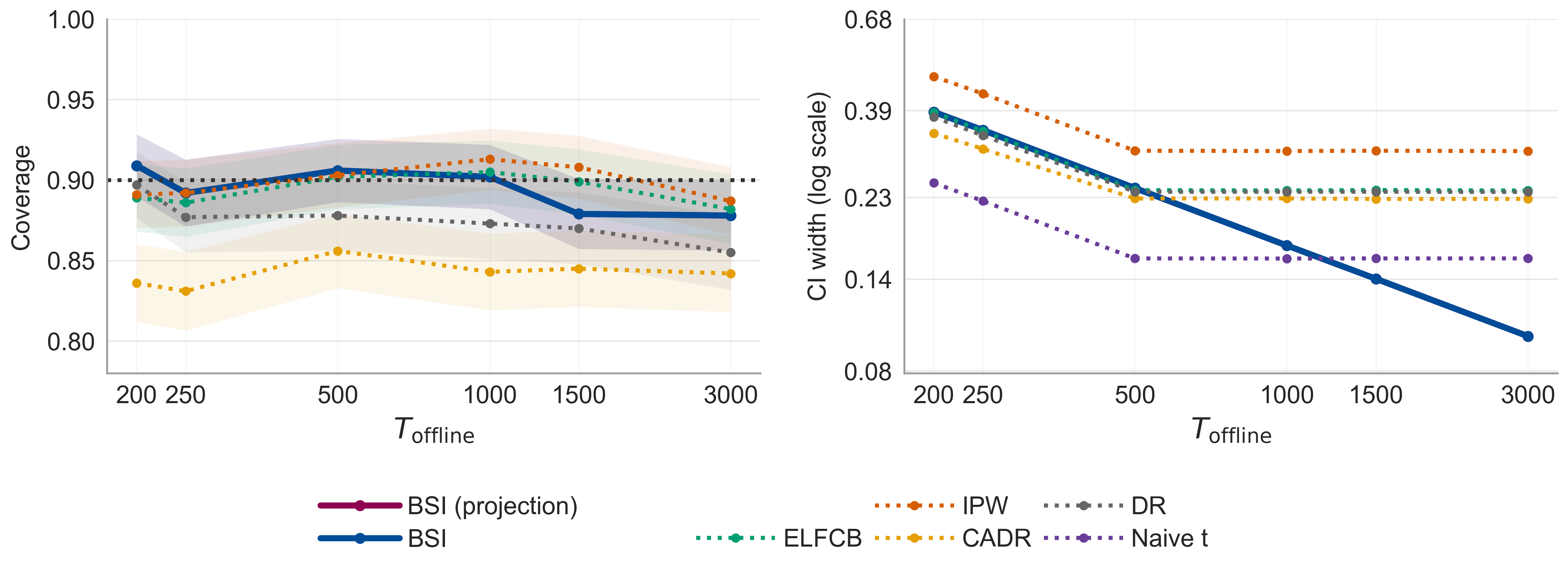}
    \caption{\bo{Gaussian Reward Distribution Experiments.} Average coverage and width of $90\%$ confidence intervals. All experiments use $T=500$ and are averaged across 1000 replications (shaded regions denote $1$ standard error). Three-arm setting with Gaussian rewards with means $(0.1,0.2,1)$ and unit variance. The variance are assumed to be unknown and estimated from the data. The behavior policy is uniform, and the evaluation policy is $\epsilon$-Greedy. True parameter $\theta^\star$ is obtained via $50{,}000$ independent MC replications of the true environment.}
    \label{fig:gaus-app}
\end{figure}

\subsection{Semi-Synthetic Environment Construction}
\label{app:semiSyntheticEnv}

To evaluate the empirical performance of BSI in a more realistic setting, we construct a synthetic simulator based on real world data. The data comes from a randomized controlled trial of interactive voice-response mobile phone surveys conducted in Bangladesh and Uganda \citep{gibson2022promised}. Adult respondents reached through automated dialing were randomized to one of three airtime incentive arms: no incentive, a promised airtime incentive, or a lottery-based airtime incentive. The study measures survey participation outcomes, primarily cooperation and response rates, in order to assess how different incentive schemes affect engagement with mobile health surveys. 

For our semi-synthetic experiments, we focus on the cooperation rates of Uganda. The paper~\citet{gibson2022promised} reports that the cooperation rate for the incentives to be $63.4\%$, $76.6\%$, and $72.2\%$ (corresponding to no response, airtime incentive and lottery based incentive respectively). We interpret these as chances of a randomly selected individual to prefer one of the listed incentives.

Using these reported rates, we generate synthetic data from a three-armed Bernoulli bandit under a uniform behavior policy, and use this setup to evaluate the performance of our method for inference on the average value of Thompson sampling when treated as the evaluation policy. Figure~\ref{fig:semi_synthetic} highlights that BSI achieves nominal coverage for $\alpha = 0.05, 0.1$, whereas the baseline methods have significant undercoverage.

\subsection{Additional Information on Adaptive Algorithms}
\label{app:adaptiveAlgs}

For our experiments, we consider three different adaptive policies, $\epsilon$-greedy,  Gaussian Thompson sampling and Bernoulli Thompson sampling. We explain each of these below.

\bo{$\epsilon$-greedy.}
In Figures~\ref{fig:motivatingIssue} (left), ~\ref{fig:Simulation} (left) and~\ref{fig:gaus-app}  we use an $\epsilon$-greedy. For the first $|\MC{A}|$ rounds, each arm is pulled once. For $t>|\MC{A}|$,
\[
\pi_t(a \mid \MC{H}_{t-1})
=
\begin{cases}
1-\epsilon+\epsilon/|\MC{A}|, & a \in \TN{argmax}_{a'} \widehat{\mu}_{a'}(t-1),\\[3pt]
\epsilon/|\MC{A}|, & \text{otherwise},
\end{cases}
\]
where
\[
\widehat{\mu}_a(t-1)
=
\frac{\sum_{s=1}^{t-1} R_s \mathbbm{1}\{A_s=a\}}
     {\sum_{s=1}^{t-1} \mathbbm{1}\{A_s=a\}}.
\]
Throughout, we use $\epsilon=0.1$. 

\bo{Thompson sampling (Beta-Bernoulli).}
For the simulations in Figures~\ref{fig:Simulation} (center and right) and \ref{fig:semi_synthetic}, we use Thompson Sampling with a Beta-Bernoulli model. The Beta distribution has prior parameters $(\alpha_0,\beta_0)=(1,1)$ for each arm. The Beta posterior for each arm is 
\[
\theta_a \mid \MC{H}_{t-1} \sim \mathrm{Beta}(\alpha_{a,t-1},\beta_{a,t-1}).
\]
Thompson Sampling selects action \(a\) according to its posterior probability of being optimal:
\begin{align*}
    \widetilde{\pi}_t(a \mid \mathcal H_{t-1})
=
\mathbb P\!\left(
a = \arg\max_{a' \in \mathcal A} \theta_{a'}
\mid
\mathcal H_{t-1}
\right).
\end{align*}
To guarantee sufficient exploration, we enforce a minimum action probability \(\pi_{\min}=0.01\) by clipping the sampling probabilities:
\begin{align}
    \pi_t(a \mid \mathcal H_{t-1})
    &=
    \pi_{\min}
    +
    (1-|\mathcal A|\pi_{\min}) \frac{
    \max\!\left\{
    \widetilde{\pi}_t(a \mid \mathcal H_{t-1})-
    \pi_{\min},
    0
    \right\}
    }{
    \sum_{a\in\mathcal A}
    \max\!\left\{
    \widetilde{\pi}_t(a \mid \mathcal H_{t-1})-
    \pi_{\min},
    0
    \right\}
    }.
    \label{eqn:clipp}
\end{align}
By construction it follows that $\sum_{a\in\mathcal A}
\pi_t(a \mid \mathcal H_{t-1}) = 1$.

\bo{Thompson sampling (Gaussian-Gaussian).}
For the simulations in Figures~\ref{fig:motivatingIssue} (right),~\ref{fig:Simulation} (center, right),~\ref{fig:BetaEqarmMain} and~\ref{fig:beta-app}, we use Thompson Sampling with a Gaussian-Gaussian model. The Bayesian model has parameters $(m_0,v_0,\sigma)=(0,1,1)$ for each arm, where $m_0$ is the prior mean, $v_0$ is the prior variance, and $\sigma$ is the reward noise standard deviation.
The Gaussian posterior for each arm is 
\[
\theta_a \mid \MC{H}_{t-1} \sim N(m_{a,t-1},v_{a,t-1}).
\]
Thompson Sampling selects an action a according to the posterior probability it is optimal:
\begin{align*}
    \tilde{\pi}_t(a \mid \HH_{t-1}) = \PP( a = \TN{argmax}_{a \in \MC{A}} \theta_a \mid \HH_{t-1}).
\end{align*}
We further clip the action selection probabilities to ensure that each action is selected with a minimum probability of $\pi_{\min} = 0.01$ using display \eqref{eqn:clipp}.

\subsection{Additional BSI Implementation Details}
\label{app:BSIdetails}

\subsubsection{BSI for Sub-Gaussian Rewards Details}

The \textsc{BSI-Hoeffding} approach uses the fact that the rewards are bounded in $[0,1]$ and sets
\[
(\sigma_a^*)^2 = 1/4
\]
for every arm. 

The \textsc{BSI-Empirical} approach treats the variance as unknown and estimates an arm-specific variance using $D^{(\pi_0)}$. The estimator $\hat{\sigma}_a^2$ is formed differently depending on whether $\pi_0$ is adaptive or not; see Sections \ref{app:nonadaptive_sig_comp} and \ref{app:adaptive_sig_comp}.

\subsubsection{Asymptotic Normality of $\hat{\lambda}$ under Static $\pi_0$}
\label{app:nonadaptive_sig_comp}

\paragraph{Gaussian (or Sub-Gaussian) Rewards.} 
In this setting,
$\lambda=(\mu_1,\sigma_1^2,\ldots,\mu_{|\mathcal{A}|},\sigma_{|\mathcal{A}|}^2)$. We use $N_a=\sum_{t=1}^{T_{\mathrm{offline}}}\mathbbm{1}(A_t=a)$,  $\hat{\mu}_a = \frac{\sum_{t=1}^{\Toff} R_t}{N_a}$, and $\hat{\sigma}_a^2 = \frac{1}{N_a} \sum_{t=1}^{\Toff} \mathbbm{1}(A_t=a) (R_t - \hat{\mu}_a)^2$.
Under a static $\pi_0$,
\begin{align*}
    \sqrt{T_{\mathrm{offline}}}
    \begin{pmatrix}
        \hat{\mu}_1-\mu_1^* \\
        \hat{\sigma}_1^2-(\sigma_1^*)^2 \\
        \vdots \\
        \hat{\mu}_{|\MC{A}|}-\mu_{|\MC{A}|}^* \\
        \hat{\sigma}_{|\MC{A}|}^2-(\sigma_{|\MC{A}|}^*)^2 
    \end{pmatrix}
    \xrightarrow{D}
    \mathcal{N}\!\left(0, \TN{BlockDiagonal}(\Sigma_{\pi_0}^{(a)})\right),
\end{align*}
where
\begin{align*}
    \Sigma_{\pi_0}^{(a)}
    =
    \pi_0(a)^{-1}
    \begin{pmatrix}
        (\sigma_a^*)^2 & 0 \\
        0 & \mathbb{E}\!\left[(R_t(a)-\mu_a^*)^4\right]-(\sigma_a^*)^4
    \end{pmatrix}.
\end{align*}

\paragraph{Bernoulli Rewards.}
In this setting, $\lambda=(\mu_1,\ldots,\mu_{|\mathcal{A}|})$. 
We use $N_a=\sum_{t=1}^{T_{\mathrm{offline}}}\mathbbm{1}(A_t=a)$ and $\hat{\mu}_a = \frac{\sum_{t=1}^{\Toff} R_t}{N_a}$. Under a static $\pi_0$,
\begin{align*}
    \sqrt{T_{\mathrm{offline}}}\,\begin{pmatrix}
        \hat{\mu}_1-\mu_1^* \\
        \vdots \\
        \hat{\mu}_{|\MC{A}|}-\mu_{|\MC{A}|}^*
    \end{pmatrix}
    \xrightarrow{D}
    \mathcal{N}\!\left(0,\,\TN{Diagonal}(\Sigma_{\pi_0}^{(a)})\right),
\end{align*}
where
\begin{align*}
    \Sigma_{\pi_0}^{(a)}
    =
    \pi_0(a)^{-1}\,\mu_a^*(1-\mu_a^*).
\end{align*}

\subsubsection{Asymptotic Normality of $\hat{\lambda}$ under Adaptive $\pi_0$}
\label{app:adaptive_sig_comp}

\paragraph{Gaussian (or Sub-Gaussian) Rewards.}
We use the following M-estimation function:
\begin{align*}
    m_{\lambda}(A_t,R_t)
    =
    -\sum_{a\in\mathcal{A}}\mathbbm{1}(A_t=a)
    \left\{\frac{(R_t-\mu_a)^2}{2\sigma_a^2}+\frac{1}{2}\log\sigma_a^2\right\},
\end{align*}
with score $\dot{m}_\lambda(A_t,R_t)\triangleq\nabla_\lambda m_\lambda(A_t,R_t)=\big(\dot{m}_{\lambda_1}(A_t,R_t),\ldots,\dot{m}_{\lambda_{|\mathcal{A}|}}(A_t,R_t)\big)^\top$, where
\[\dot{m}_{\lambda_a}(A_t,R_t)
    =\mathbbm{1}(A_t=a)\begin{pmatrix}
            \dfrac{R_t-\mu_a}{\sigma_a^2}\\[0.5em]
            \dfrac{(R_t-\mu_a)^2}{2\sigma_a^4}-\dfrac{1}{2\sigma_a^2}
        \end{pmatrix}.
\]
The estimator $\hat{\lambda}_a=(\hat{\mu}_a,\hat{\sigma}_a^2)$ solves
$0=\sum_{t=1}^{T_{\mathrm{offline}}}W_t\dot{m}_{\lambda_a}(A_t,R_t)$,
where $W_t=1/\sqrt{\pi_t(A_t\mid \MC{H}_{t-1})}$, giving
\begin{align*}
    \hat{\mu}_a
    =
    \frac{\sum_{t=1}^{T_{\mathrm{offline}}}W_t\mathbbm{1}(A_t=a)R_t}
         {\sum_{t=1}^{T_{\mathrm{offline}}}W_t\mathbbm{1}(A_t=a)}
    \qquad\text{and}\qquad
    \hat{\sigma}_a^2
    =
    \frac{\sum_{t=1}^{T_{\mathrm{offline}}}W_t\mathbbm{1}(A_t=a)(R_t-\hat{\mu}_a)^2}
         {\sum_{t=1}^{T_{\mathrm{offline}}}W_t\mathbbm{1}(A_t=a)}.
\end{align*}
Using the asymptotic normality result from \citet{zhang2021statistical},
\begin{align}
\label{eq:asymp_adaptive_gau}
    \sqrt{T_{\mathrm{offline}}}\,\Sigma^{-1/2}\ddot{M}_{T_{\mathrm{offline}}}
    \left(\hat{\lambda}-\lambda^*\right)
    \xrightarrow{D}
    \mathcal{N}(0,I),
\end{align}
where
\begin{align*}
    \Sigma
    =
    \mathrm{BlockDiagonal}\!\left(
        \mathbb{E}\!\left[
            \frac{1}{T_{\mathrm{offline}}}\sum_{t=1}^{T_{\mathrm{offline}}}
            W_t^2\,\dot{m}_{\lambda_a}(A_t,R_t) \dot{m}_{\lambda_a}(A_t,R_t)^\top
        \right]
    \right)
    \in\mathbb{R}^{|\lambda|\times|\lambda|},
\end{align*}
Also,
$\ddot{M}_{T_{\mathrm{offline}}}=\frac{1}{T_{\mathrm{offline}}}\sum_{t=1}^{T_{\mathrm{offline}}}\mathrm{BlockDiagonal}\big(W_t\ddot{m}_{\lambda_a}(A_t,R_t)\big)$,
where
\begin{align*}
    \ddot{m}_{\lambda_a}(A_t,R_t)
    =
    \mathbbm{1}(A_t=a)
    \begin{pmatrix}
        \dfrac{-1}{\sigma_a^2} & -\dfrac{R_t-\mu_a}{\sigma_a^4}\\[0.5em]
        -\dfrac{R_t-\mu_a}{\sigma_a^4} & -\dfrac{(R_t-\mu_a)^2}{\sigma_a^6}+\dfrac{1}{2\sigma_a^4}
    \end{pmatrix}.
\end{align*}
Note that to satisfy Assumption~\ref{assump:lambdaNormalityAdaptive}, we use $V_{\text{offline}}=\sqrt{T_{\mathrm{offline}}}\,\Sigma^{-1/2}\ddot{M}_{T_{\mathrm{offline}}}$.

\paragraph{Bernoulli Rewards.}
We use the following M-estimation function:
\begin{align*}
    m_{\lambda}(A_t,R_t)
    =
    \sum_{a\in\mathcal{A}}\mathbbm{1}(A_t=a)
    \left\{R_t\log\mu_a+(1-R_t)\log(1-\mu_a)\right\},
\end{align*}
with score
\begin{align*}
    \dot{m}_{\lambda_a}(A_t,R_t)
    =
    \mathbbm{1}(A_t=a)\left(\frac{R_t-\mu_a}{\mu_a(1-\mu_a)}\right).
\end{align*}
The estimator $\hat{\lambda}_a=\hat{\mu}_a$ solves
$0=\sum_{t=1}^{T_{\mathrm{offline}}}W_t\dot{m}_{\lambda_a}(A_t,R_t)$,
where $W_t=1/\sqrt{\pi_t(A_t\mid \MC{H}_{t-1})}$, giving
\begin{align*}
    \hat{\mu}_a
    =
    \frac{\sum_{t=1}^{T_{\mathrm{offline}}}W_t\mathbbm{1}(A_t=a)R_t}
         {\sum_{t=1}^{T_{\mathrm{offline}}}W_t\mathbbm{1}(A_t=a)}.
\end{align*}
Using the asymptotic normality result from \citet{zhang2021statistical},
\begin{align*}
    \sqrt{T_{\mathrm{offline}}}\,\Sigma^{-1/2}\ddot{M}_{T_{\mathrm{offline}}}
    \left(\hat{\lambda}-\lambda^*\right)
    \xrightarrow{D}
    \mathcal{N}(0,I),
\end{align*}
where
\begin{align*}
    \Sigma
    =
    \mathrm{Diagonal}\!\left(
        \mathbb{E}\!\left[
            \frac{1}{T_{\mathrm{offline}}}\sum_{t=1}^{T_{\mathrm{offline}}}
            W_t^2\,\dot{m}_{\lambda_a}(A_t,R_t) \dot{m}_{\lambda_a}(A_t,R_t)^\top
        \right]
    \right),
\end{align*}
Also, $\ddot{M}_{T_{\mathrm{offline}}}=\frac{1}{T_{\mathrm{offline}}}\sum_{t=1}^{T_{\mathrm{offline}}}\mathrm{Diagonal}\big(W_t\ddot{m}_{\lambda_a}(A_t,R_t)\big)$,
where
\begin{align*}
    \ddot{m}_{\lambda_a}(A_t,R_t)
    =
    \mathbbm{1}(A_t=a)
    \left(-\frac{R_t}{\mu_a^2}-\frac{1-R_t}{(1-\mu_a)^2}\right).
\end{align*}

\subsection{Baseline Inference Methods} \label{app:baselines}

In this Section, we discuss some popular off-policy evaluation methods that construct confidence intervals around the average expected reward. We consider four baseline methods---\textit{Inverse Propensity Weighting (IPW)}~\citep{horvitz1952generalization,hirano2003efficient}, \textit{Doubly Robust (DR)} estimator~\citep{robins1994estimation,chernozhukov2018double,jiang2016doubly}, \textit{ Contextual Adaptive Doubly Robust (CADR)} estimator~\citep{bibaut2021post} and  \textit{Empirical Likelihood (ELFCB)}~\citep{karampatziakis2020empirical}. The IPW and DR estimators have guarantees when both the behavior and evaluation policy are static. In contrast, CADR and ELFCB allow the behavior policy is adaptive. However, none of the standard OPE methods assume that the evaluation policy $\pi_1(A \mid \MC{H}_{t-1})$ to be adaptive, i.e., they depend on the trajectory history $\MC{H}_{t-1}$.

In this work, throughout we implement these baseline methods using the following importance weights (when importance weights are needed in the method):
\[
W_t^{}
=
\frac{\pi_1(A_t \mid \MC{H}_{t-1})}{\pi_0(A_t \mid \MC{H}_{t-1})}.
\]

\subsubsection{Inverse Probability Weighting}

The inverse propensity weighting (IPW) baseline re-weights observed rewards by
the importance weights. 
The estimator is defined as
\[
\widehat{\theta}_{\mathrm{IPW}}
=
\frac{1}{T_{\text{offline}}}\sum_{t=1}^{T_{\text{offline}}} W_t R_t.
\]
The implementation then forms a confidence intervals as follows:
\[
\left[\widehat{\theta}_{\mathrm{IPW}}
\pm
z_{1-\alpha/2} \frac{\widehat{\sigma}}{\sqrt{T_{\mathrm{offline}}}} \right] \, \quad \text{where} \quad 
\widehat{\sigma}^2 \triangleq \frac{1}{T_{\mathrm{offline}}}\sum^{T_{\mathrm{offline}}}_{t=1}  (W_tR_t -\widehat{\theta}_{\mathrm{IPW}} )^2.
\]

\subsubsection{Doubly Robust}

The doubly robust (DR) estimator integrates a reward model with importance 
weighting to yield an estimator that remains consistent if either component 
is correctly specified. Let $\widehat{q}(a)$ denote the estimated mean reward 
for arm $a$, obtained from the offline data. For each round $t$, define the 
DR score
\[
\phi_t \;=\; \sum_{a \in \mathcal{A}} \pi_t(a \mid \mathcal{H}_{t-1})\,\widehat{q}(a) 
\;+\; W_t\bigl(R_t - \widehat{q}(A_t)\bigr),
\]
where $W_t = \pi_t(A_t)/\pi_{0,t}(A_t)$ is the importance weight at time $t$. 
The DR point estimate is then
\[
\widehat{\Psi}_{\mathrm{DR}} \;=\; \frac{1}{T_{\mathrm{off}}} 
\sum_{t=1}^{T_{\mathrm{off}}} \phi_t.
\]

\paragraph{Reward Model Estimation.}
The reward model $\widehat{q}(a)$ is estimated separately for each arm using 
the offline sample. Let
\[
N_a \;=\; \sum_{t=1}^{T_{\mathrm{off}}} \mathbf{1}\{A_t = a\}, 
\qquad 
\widehat{\mu}_a \;=\; \frac{1}{N_a} \sum_{t=1}^{T_{\mathrm{off}}} 
R_t\,\mathbf{1}\{A_t = a\}
\]
denote the arm-specific sample size and sample mean reward, respectively.

\textit{Bernoulli rewards.} A Beta prior is placed on the mean reward,
\[
q(a) \;\sim\; \mathrm{Beta}(\alpha_0,\,\beta_0),
\]
with default hyperparameters $(\alpha_0, \beta_0) = (1, 1)$. The reward model 
is taken to be the Beta--Binomial posterior mean,
\[
\widehat{q}(a) \;=\; \frac{N_a\,\widehat{\mu}_a + \alpha_0}{N_a + \alpha_0 + \beta_0}.
\]

\textit{Gaussian rewards.} A normal prior is placed on the mean reward,
\[
q(a) \;\sim\; \mathcal{N}(m_0,\, v_0),
\]
with default hyperparameters $(m_0, v_0) = (0, 1)$ and known observation 
standard deviation $\sigma = 1$. The reward model is taken to be the conjugate 
normal--normal posterior mean,
\[
\widehat{q}(a) \;=\; \frac{m_0/v_0 \;+\; N_a\,\widehat{\mu}_a/\sigma^2}
{1/v_0 \;+\; N_a/\sigma^2}.
\]

\paragraph{Confidence Interval via Bootstrap.}
To account for the sampling variability introduced by estimating $\widehat{q}$ 
from data, the confidence interval is constructed via a nonparametric bootstrap. 
For each replication $b = 1, \ldots, B$, a bootstrap sample of size 
$T_{\mathrm{off}}$ is drawn with replacement from the offline dataset. The 
arm-wise reward model $\widehat{q}^{(b)}$ is refit on this bootstrap sample, 
and the DR estimate $\widehat{\Psi}_{\mathrm{DR}}^{(b)}$ is recomputed using 
the resampled actions, rewards, target-policy probabilities, and importance 
weights. 
The resulting $(1-\alpha)$ bootstrap confidence interval is
\[
\left[\,\widehat{\Psi}_{\mathrm{DR}} \;\pm\; z_{1-\alpha/2}\,
\widehat{\sigma}_{\mathrm{boot}}\,\right], \quad \text{where} \quad
\widehat{\sigma}^2_{\mathrm{boot}} \triangleq \frac{1}{B}\sum^{B}_{b=1}  (\widehat{\Psi}_{\mathrm{DR}}^{(b)} -\bar{\Psi}_{\mathrm{DR}} )^2.
\]

\subsubsection{Contextual Adaptive Doubly Robust (CADR)}

Motivated by Contextual Adaptive Doubly Robust (CADR)~\citep{bibaut2021post}, we use a variance-stabilized IPW method as a baseline for off-policy evaluation. This approach modifies the IPW with an adaptive variance-stabilization component. Let
\begin{align*}
    \Gamma_{T_{\text{offline}}} \triangleq \left(\frac{1}{T_{\text{offline}}}\sum_{t=1}^{T_{\text{offline}}} \frac{1}{\widehat{\sigma}_t^{}}  \right)^{-1}  \quad \text{and,} \quad \widehat{\Psi}_{T_{\text{offline}}} = \Gamma_{T_{\text{offline}}} \left( \frac{1}{T_{\text{offline}}}\sum_{t=1}^{T_{\text{offline}}}  \frac{1}{\widehat{\sigma}_t^{}} W_t R_t \right)
\end{align*}
where,
\begin{align*}
   \widehat{\sigma}_t^2
\triangleq
\left(
\frac{1}{t-1}\sum_{s=1}^{t-1} ( W_s R_s)^2
\right)
-
\left(
\frac{1}{t-1}\sum_{s=1}^{t-1}  W_s R_s
\right)^2 
\end{align*}

Thus, for nominal confidence level \(1-\alpha\), the CADR confidence interval takes the form
\[
\left[
 \widehat{\Psi}_{T_{\text{offline}}} \pm z_{1-\alpha/2}\,\frac{ \Gamma_{T_{\text{offline}}}}{\sqrt{\Toff}}
\right].
\]

\subsubsection{Empirical Likelihood (ELFCB)}

ELFCB~\citep{karampatziakis2020empirical} constructs confidence bounds for the target policy value using an
empirical-likelihood formulation built from the weighted logged rewards. We implement this method using the code available in the following public repository: \href{https://github.com/pmineiro/elfcb}{https://github.com/pmineiro/elfcbre}

\end{document}